\documentclass{article}

\usepackage{hhline}
\usepackage{microtype}
\usepackage{graphicx}
\usepackage{subcaption}
\usepackage{booktabs} 
\usepackage{fontawesome5}
\usepackage{mathrsfs}
\usepackage{multirow}
\usepackage{fontawesome5}
\usepackage{colortbl}
\usepackage[table]{xcolor}
\usepackage{array}
\usepackage{makecell}
\usepackage{siunitx}
\usepackage{ragged2e}
\usepackage{enumitem}
\usepackage{pifont}
\definecolor{bestred}{RGB}{200, 0, 0}
\definecolor{secondblue}{RGB}{70, 130, 180}
\newcommand{\zwz}[1]{\textcolor{black}{#1}}
\definecolor{rowpink}{RGB}{254,247,242}

\definecolor{tocbg}{RGB}{245, 245, 245}     
\definecolor{tocborder}{RGB}{200, 200, 200}

\usepackage{multirow}

\usepackage{twemojis}
\definecolor{HeaderGray}{gray}{0.85}

\usepackage[most]{tcolorbox}

\definecolor{mylavender}{RGB}{220, 220, 250}

\usepackage{hyperref}

\usepackage[accepted]{icml2026}

\usepackage{amsmath}
\usepackage{amssymb}
\usepackage{mathtools}
\usepackage{amsthm}

\usepackage{booktabs} 
\usepackage{tabularx} 
\usepackage{caption}  

\usepackage[capitalize,noabbrev]{cleveref}

\theoremstyle{plain}
\newtheorem{theorem}{Theorem}[section]
\newtheorem{proposition}[theorem]{Proposition}

\theoremstyle{definition}

\theoremstyle{remark}

\usepackage{booktabs}       
\usepackage{multirow}       
\usepackage[table]{xcolor}  
\usepackage{subcaption}    
\usepackage{amssymb}

\definecolor{tableHeader}{RGB}{235, 235, 235}
\definecolor{tableRow}{RGB}{248, 248, 248}

\usepackage[textsize=tiny]{todonotes}

\icmltitlerunning{Attend to Anything: Foundation Model for Unified Human Attention Modeling}
\definecolor{tableHeader}{RGB}{230,235,245}
\definecolor{tableRow}{RGB}{245,247,250}

\begin{document}

\twocolumn[
  \icmltitle{Attend to Anything: Foundation Model for Unified Human Attention Modeling}




  \begin{icmlauthorlist}
    \icmlauthor{Wenzhuo Zhao}{scu}
    \icmlauthor{Ronghao Xian}{scu}
    \icmlauthor{Keren Fu}{scu,lab}
    \icmlauthor{Qijun Zhao}{scu,lab}
  \end{icmlauthorlist}

  \icmlaffiliation{scu}{College of Computer Science, Sichuan University, Chengdu, 610065, China}
  \icmlaffiliation{lab}{National Key Laboratory of Fundamental Science on Synthetic Vision, Sichuan University, Chengdu, 610065, China}

  \icmlcorrespondingauthor{Keren Fu}{fkrsuper@scu.edu.cn}
  \icmlkeywords{Machine Learning, ICML}
  \vskip 0.3in
]



\printAffiliationsAndNotice{}  

\begin{abstract}  
  Existing human attention (saliency) modeling  methods persist as highly fragmented across modalities, scenes, and task formulations. Consequently, even with increasing model capacity and data scale, current models predominantly remain scene-dependent and task-specific, failing to practically generalize in real-world applications.
  To address the fundamental limitations, we present the Attend to Anything Model (AAM), a multi-modal foundation model that unifies attention modeling across various image, video, and audio-visual tasks and scenes. AAM reformulates attention as a cognitive entailment relationship organized in a general-to-specific hierarchy, implemented through language prompts with hierarchical embeddings in hyperbolic space. Furthermore, to unify static image and dynamic video attention, we adopt a fluid-dynamics perspective, formulating video-frame attention as a diffusive temporal evolution governed by the Fokker--Planck equation.
  Extensive experiments on 16 benchmarks demonstrate that AAM consistently outperforms state-of-the-art methods by an average of 6\% across various scenarios, while achieving approximately a 4$\times$ speedup in video inference.
  Overall, these results demonstrate that AAM provides a principled foundation for future research on attention and saliency-related tasks. The dataset and code will be available at \url{https://github.com/wz-zhao/Attend-to-Anything}.
\end{abstract}

\begin{figure}[!t]
    \centering
    \captionsetup{skip=5pt}
    \includegraphics[width=1\linewidth]{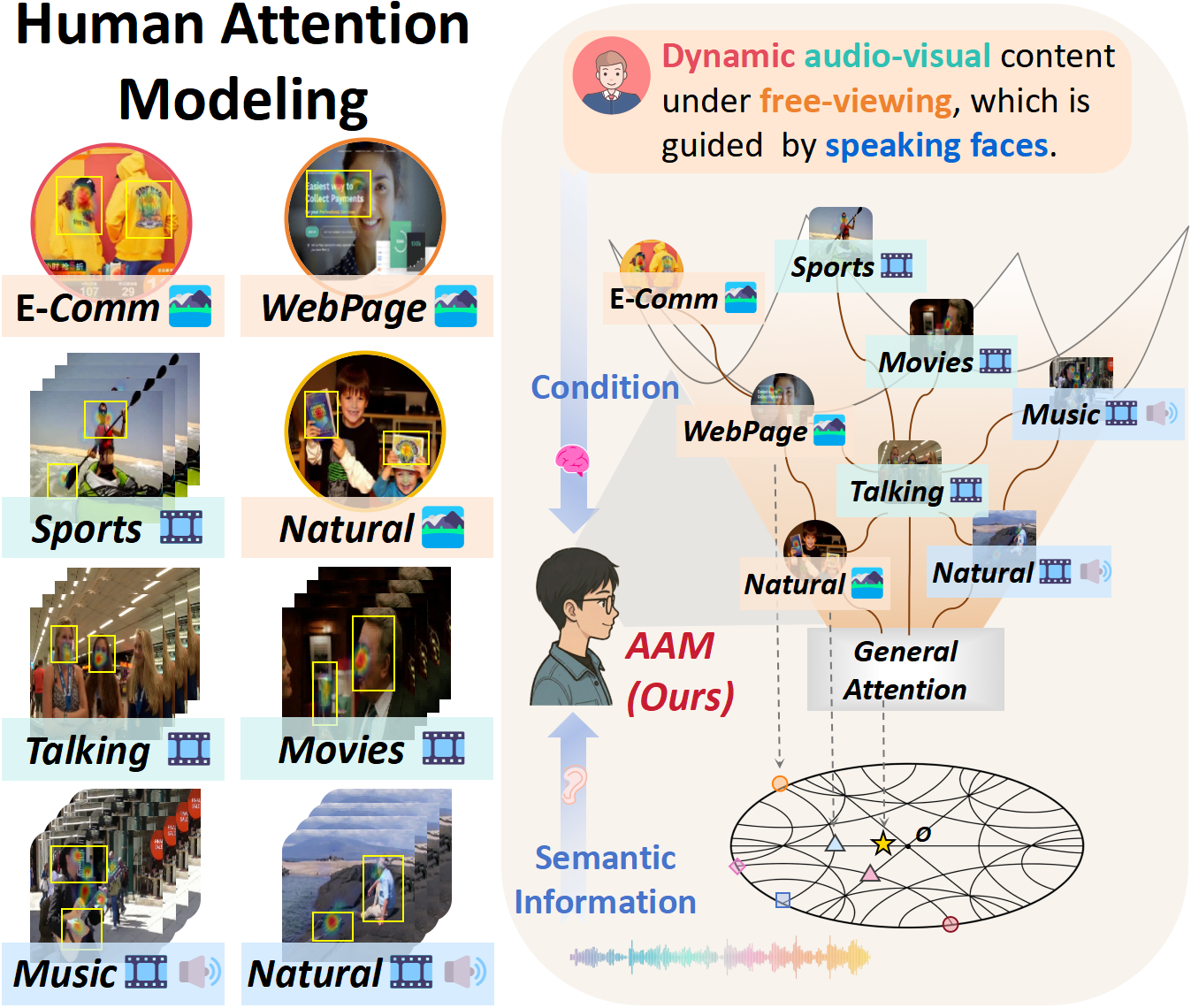}
    \caption{The Attend to Anything Model (AAM) unifies fragmented human attention tasks into a coherent  foundation. By learning a general-to-specific hierarchy in hyperbolic space, AAM effectively models attention patterns across images, videos, and audio-visual scenarios, spanning from low-level visual saliency to complex, semantic-driven dynamic interactions.}
    \vspace{-2mm}
    \label{fig_intro}  
\end{figure}

\section{Introduction}
\label{sec:introduction}
Human visual attention modeling (saliency prediction) aims to predict where humans look in visual stimuli and constitutes a fundamental component of multimedia understanding \cite{mishra2021multi}, marketing analytics \cite{jiang2023ueyes}, and robotic perception \cite{samani2023eye}. Despite decades of progress across image, video, and audio-visual settings, existing attention models remain highly fragmented: different modalities, scenes, and task conditions are typically studied as \textbf{isolated problems} with dedicated architectures and training protocols \cite{zhou2023transformer,tang2025cardiff}. This fragmentation stands in stark contrast to recent advances in computer vision, where unified foundation models have demonstrated strong generalization across data domains and tasks.
To date, attention modeling still lacks a unified \textbf{foundation model} capable of generalizing across modalities, scenes, and tasks, which severely limits its generalization ability and real-world applicability. The persistent gap in cross-dataset generalization, despite increasing model capacity and data scale \cite{kummerer2025modeling}, points to a deeper issue: the current formulation of attention as disjoint tasks fails to capture its underlying unified cognitive process. 
The absence of a unified foundation model in attention modeling can be traced to \textbf{two fundamental challenges} rooted in current problem formulations: 

\textit{\textbf{I) Cross-Scene generalization.}} From a neuroscientific perspective \cite{rao1999predictive}, human visual attention is \textbf{hierarchically} modulated by cognitive context and tasks, a property widely acknowledged in attention modeling research \cite{yarbus2013eye,lou2022transalnet}. However, existing methods \cite{droste2020unified,chen2023deep} predominantly attribute condition-dependent variations to dataset-specific statistical biases, thereby collapsing the hierarchical modulation (\textbf{from general to specific}) into dataset-specific differences. Under this paradigm, attention models are typically trained and evaluated on individual datasets \cite{xie2024audio,jin2025hierarchical}, leading to a performance plateau on existing benchmarks \cite{kummerer2023predicting}. Recent studies further reveal a substantial performance drop (around \textbf{40\%}) when models trained on one dataset are applied to another \cite{kummerer2025modeling}, a discrepancy that cannot be resolved simply by scaling training data.

Although several works attempt to alleviate this issue via joint training across multiple datasets \cite{droste2020unified,hosseini2025sum}, they rely on dataset-specific priors or isolated parameters (e.g., normalization statistics or Gaussian maps), fundamentally limiting their ability to generalize beyond the statistics of observed datasets.

\textit{\textbf{II) Cross-Task modeling.}}
A second challenge arises from the \textbf{heterogeneous formulation} of attention modeling across modalities and temporal scales.
Image and video attention have long been treated as distinct tasks, despite arising from the same underlying attentional mechanisms.
Existing video models operate on \textbf{fixed-window clips} and encode temporal information through optical flow \cite{lai2019video} or spatiotemporal convolutions and transformers \cite{zhou2023transformer}, only outputting the final frame.
Such formulations impose task boundaries, restrict frame-wise inference, and incur substantial computational overhead, limiting modeling flexibility and inference efficiency.

To address these challenges, we propose the \textbf{Attend to Anything Model (AAM)}, the first unified multi-modal foundation model for attention modeling across images, videos, and audio-visual scenarios.
\ding{182} To tackle \textbf{cross-scene generalization}, we structure the \textbf{cognitive evolution} from general priors to specific tasks as an asymmetric hierarchical implication (Fig.~\ref{fig_intro}).
Since hyperbolic space naturally supports hierarchical structures \cite{li2025enhancing,liu2025hyperbolic}, we model attention differences as \textbf{entailment relations} induced by text prompts in \textbf{hyperbolic} space.
Notably, compared to existing methods based on parameter isolation, AAM introduces a cognitively motivated paradigm for modeling attention variation, offering theoretical and empirical insights for the future development of holistic visual perception systems.
\ding{183} To tackle \textbf{cross-task modeling}, we introduce the Fokker--Planck Dynamics Module (FPD),  which models video attention as the \textbf{fluid dynamics} of static attention over a spatiotemporal manifold. FPD models attention as the continuous transport governed by the \textbf{Fokker--Planck }equation, decomposing the evolution process into advection and diffusion. This physics-informed formulation unifies static and dynamic attention, enabling efficient frame-wise inference.
\ding{184} To support the foundation model training, we curate \textbf{Attention-1.75M}, a standardized corpus unifying over 1.75M fixation instances across images, videos, and audio-visual scenarios equipped with dataset-level texts. Our principal contributions are summarized as follows:

\ding{182} \textbf{\textit{Problem Identification.}}  
We identify a \textbf{fundamental mismatch} between unified human attentional mechanisms and existing fragmented attention modeling formulations, revealing that the absence of a unified foundation model severely limits generalization and real-world applicability.

\ding{183} \textbf{\textit{Paradigm Reformulation.}}  
We propose Attend to Anything Model (AAM), the first unified multi-modal foundation model for attention modeling across image, video, and audio-visual scenarios.
AAM achieves cross-scene generalization via hierarchical entailment in hyperbolic space and resolves static–dynamic task incompatibility through Fokker–Planck physical temporal dynamics, unifying image and video attention in a continuous framework.

\ding{184} \textbf{\textit{Experimental Validation.}}  
We evaluate AAM on 16 benchmarks, demonstrating consistent state-of-the-art (SOTA) performance with an average improvement of \textbf{6\%} and approximately a \textbf{4$\times$} speedup in video inference, supported by a large-scale training corpus of \textbf{1.75M} instances.

\ding{185} \textbf{\textit{Conceptual Insights.}}  
Extensive inductive experiments demonstrate that AAM introduces a cognitively aligned paradigm for modeling attentional differences, contributing theoretical understanding and empirical evidence to the future development of holistic visual perception systems.

\begin{figure}[!t]
    \centering
    \captionsetup{skip=5pt}
    \includegraphics[width=1\linewidth]{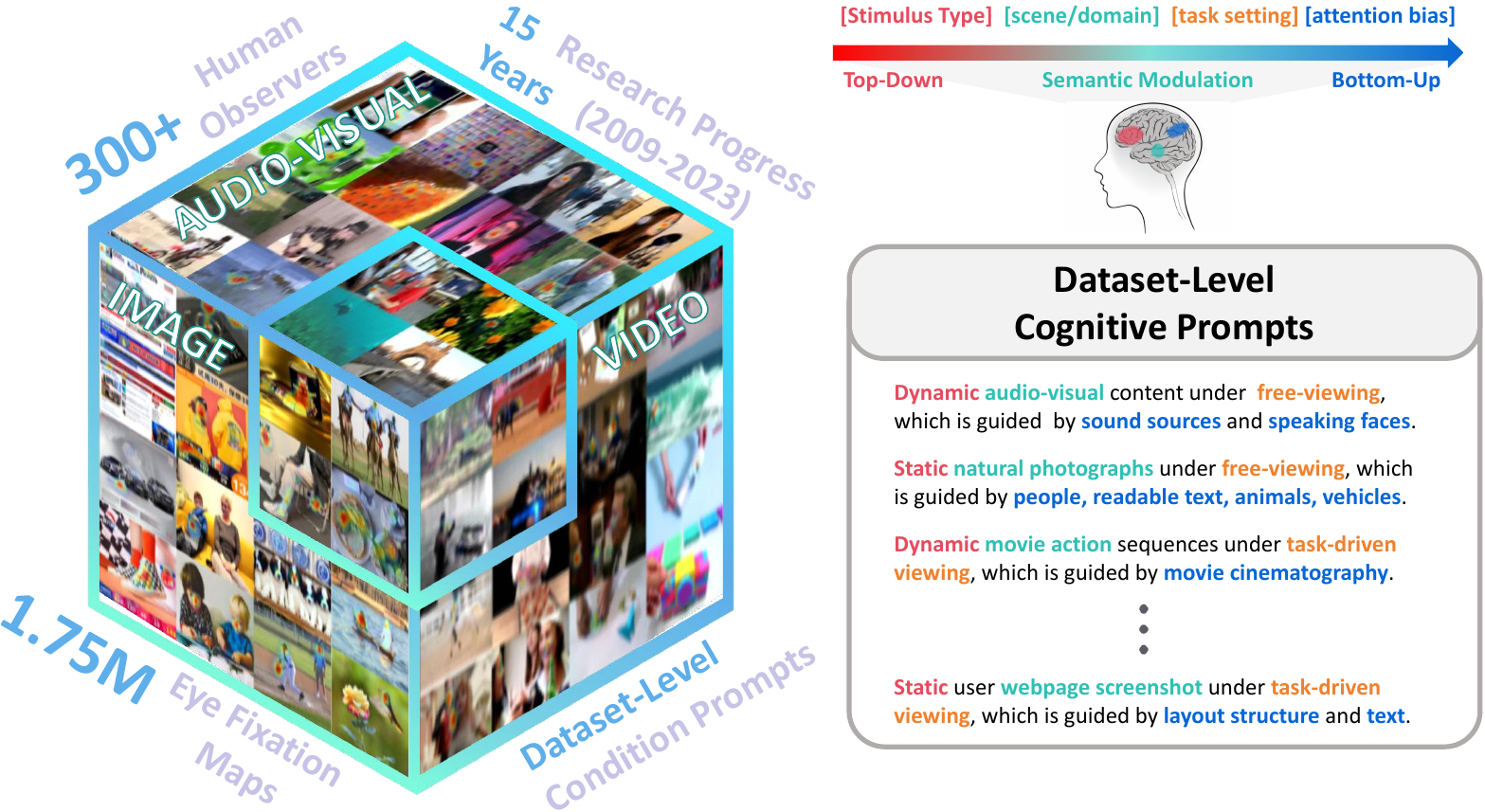}
    \caption{Attention-1.75M surpasses existing datasets in both scale and diversity, and is equipped with dataset-level texts derived from the dataset acquisition protocols (Appendix~\ref{app:datasets} for details).}
    \vspace{-5mm}
    \label{fig_data}  
\end{figure}

\section{Related Works}
\paragraph{Evolution of Attention Modeling.} The field has evolved from heuristic contrast features \cite{riche2013rare2012} to deep representation learning across three tracks. Image modeling employs CNN and Transformer backbones \cite{huang2015salicon, lou2022transalnet} to learn distribution mappings, but typically ignores the hierarchical nature of cognitive tasks. Temporal modeling extends these to videos via motion cues \cite{jain2021vinet} or spatiotemporal encoders such as 3D-CNNs and Video Swin Transformers \cite{bellitto2021hierarchical, jin2025hierarchical}, which are often constrained by fixed-window and high complexity. Audio-visual modeling integrates auditory streams through sound localization or modality-specific fusion branches \cite{aytar2016soundnet, xiong2024diffsal}. Despite recent joint-training attempts \cite{droste2020unified, hosseini2025sum,kummerer2025modeling}, existing methods still rely on parameter isolation or specialized modules to handle heterogeneous data. In contrast, AAM provides a unified paradigm by modeling attention in hyperbolic space and bridging static-dynamic transitions through a continuous fluid-dynamics formulation.

\paragraph{Biological and Cognitive Foundations of Attention.}
Existing studies \cite{jiang2024effectiveness} in neuroscience and cognitive science provide important theoretical support for modeling attention in a cognitively aligned manner. First, the hierarchical organization of the visual cortex, from low-level edge responses in V1 to high-level semantic representations in IT, suggests that visual perception is inherently structured across multiple levels of abstraction. Prior work has shown that visual hypercolumns can be naturally modeled with hyperbolic geometry~\cite{chossat2009hyperbolic}, and that spiking activity patterns in V1/V2 exhibit intrinsic hyperbolic structure~\cite{guidolin2022geometry}. These findings motivate the use of hyperbolic space to capture the hierarchical attention. Second, the drift-diffusion model (DDM) has been widely used to explain perceptual decision-making and saccadic latency in cognitive science~\cite{ratcliff1978theory,bogacz2006physics}. Its macroscopic dynamics are closely related to the Fokker--Planck equation, which describes the evolution of probability density over time. This connection provides a biologically grounded interpretation of attention shifts: drift captures top-down task-driven bias, diffusion reflects bottom-up visual exploration, and the concentration of probability mass corresponds to decision-boundary crossing~\cite{shinn2020flexible}. Together, these biological and cognitive studies motivate AAM as a formulation that bridges static geometry and temporal attention dynamics.

\section{Methodology}
\label{sec:method}

\begin{figure*}[!t]
    \centering
    \captionsetup{skip=5pt}
    \includegraphics[width=1\linewidth]{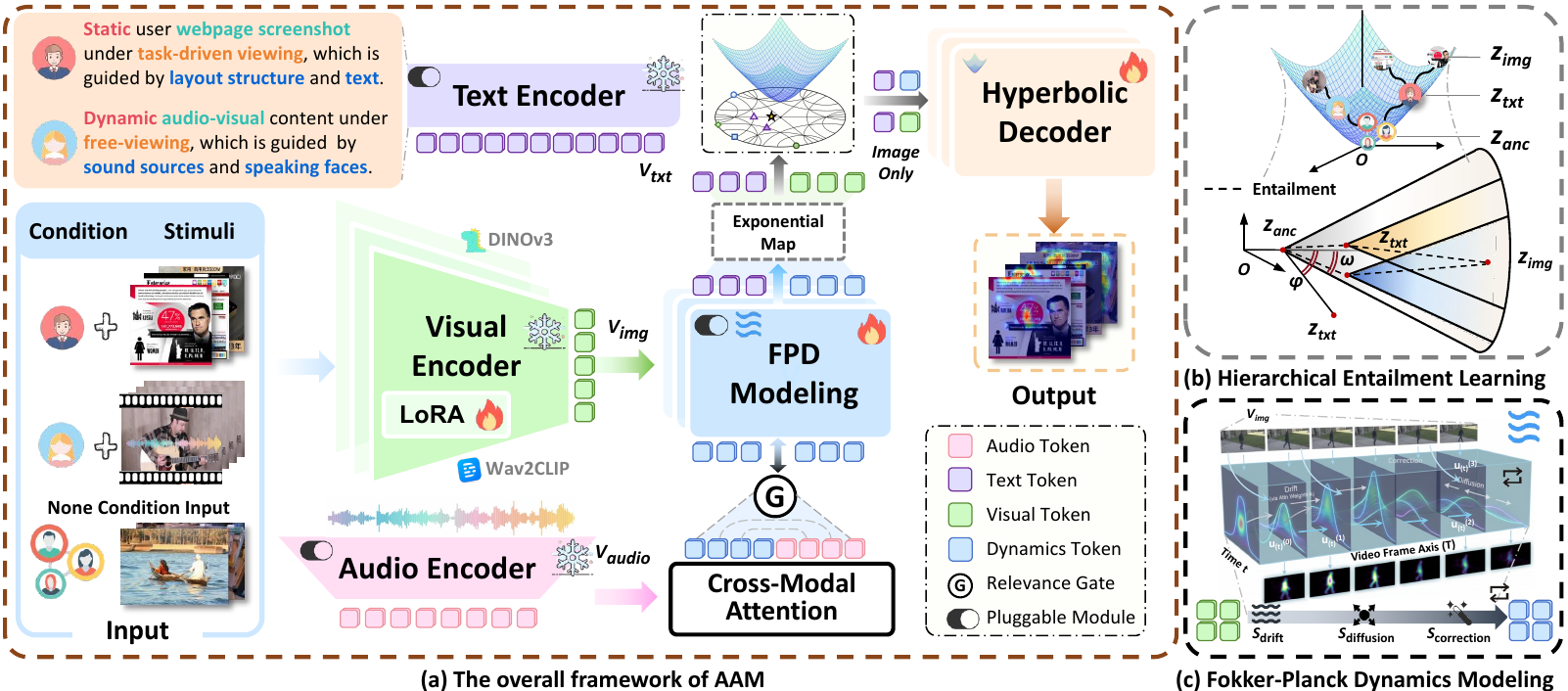}
    \caption{(a) Overview of AAM. (b) The general-specific attention ordering between ($\mathbf z_{\mathrm{anc}}$,$\mathbf z_{\mathrm{txt}}$), ($\mathbf z_{\mathrm{txt}}$,$\mathbf z_{\mathrm{img}}$) is enforced in hyperbolic space using entailment cones. The external angle $\phi$ of a specific condition ($\mathbf z_{\mathrm{txt}}$) is pushed to be within the aperture threshold $\eta\omega$ of the general attention ($\mathbf z_{\mathrm{anc}}$).
    (c) The Fokker--Planck Dynamics (FPD) Modeling illustrates the drift, diffusion, and correction processes used to model the transition of attention over the video frame axis.}
    \label{fig_overview}
    \vspace{-1mm}
\end{figure*}

\subsection{Overview of AAM}
In this section, we present an overview of the proposed AAM, which represents human attention as a shared latent process across modalities, scenes, and time by modeling hierarchical semantic specialization and temporal dynamics, as shown in Fig.~\ref{fig_overview}. \textbf{Visual} inputs are encoded by a frozen self-supervised backbone (DINOv3~\cite{simeoni2025dinov3}) with LoRA for adaptation to attention modeling (see Appendix~\ref{app:implementation} for detailed hyperparameters and configurations).
\textbf{Text} prompts and \textbf{audio} signals are encoded using frozen CLIP~\cite{radford2021learning} and Wav2CLIP~\cite{wu2022wav2clip} encoders, respectively, with audio mapped into the visual semantic space.
Audio-visual fusion is performed through a relevance-gated cross-attention mechanism, ensuring that audio cues contribute only when semantically aligned (Appendix~\ref{subsec:av_branch}).
For video inputs, frame-wise attention representations are refined by a Fokker--Planck Dynamics (FPD) module that models attention evolution over a spatiotemporal manifold (Sec.~\ref{sec:FPD}).
Visual and textual representations are lifted into hyperbolic space via hierarchical entailment learning for explicit hierarchy enforcement (Sec.~\ref{sec:hyperbolic}).
Finally, a geometry-aware hyperbolic decoder projects structured representations back to Euclidean space to generate spatial attention maps (Sec.~\ref{sec:decoder}).

\subsection{Hierarchical Human Attention Modeling in Hyperbolic Space}
\label{sec:hyperbolic}
\subsubsection{Preliminaries}
Hyperbolic geometry, a non-Euclidean geometry characterized by constant negative curvature and exponential volume growth, naturally encodes the degree of semantic specialization through the distance from the origin and represents the scope of refinement via angular regions. This geometric structure makes it an ideal choice for learning representations of data with inherent hierarchical structures~\cite{krioukov2010hyperbolic, sarkar2011low, nickel2017poincare}. Specifically, the Lorentz model $\mathbb L^n_\kappa$ with curvature $-\kappa\in\mathbb R$ is defined as an $n$-dimensional manifold represented as the upper sheet of a two-sheeted hyperboloid in ($n$ + 1)-dimensional Minkowski spacetime, which is described as
\begin{equation}
\mathbb L^n_\kappa
=
\left\{
\mathbf z\in\mathbb R^{n+1}:\;
\langle \mathbf z,\mathbf z\rangle_{\mathbb L}
= -\frac{1}{\kappa},\;
\;
\mathbf z_0 = \sqrt{\frac{1}{\kappa} + \|\tilde{\mathbf z}\|_2^2}
\right\},
\end{equation}
where $\langle \mathbf .,\mathbf .\rangle_{\mathbb L}$ denotes the Lorentzian inner product for $\mathbf z,\mathbf z' \in \mathbb L^n_\kappa$, defined as:
\begin{equation}
\langle \mathbf z,\mathbf z'\rangle_{\mathbb L}
=
- \mathbf z_0 \mathbf z'_0 + \langle\tilde{\mathbf z},\tilde{\mathbf z}'\rangle_{\mathbb E},
\end{equation}
with $\langle \mathbf .,\mathbf .\rangle_{\mathbb E}$ denoting the Euclidean inner product. For each vector $\mathbf z$, the first dimension is taken as the \textit{time}-axis, denoted $\mathbf z_0$, and the remaining $n$ dimensions as the
\textit{spatial}-coordinates, denoted $\tilde{\mathbf z}\in\mathbb{R}^n
$.
The Lorentzian distance between two points in $\mathbb L^n_\kappa$, is the length of the shortest path (\textit{geodesic}), which can be computed as:
\begin{equation}
d_\mathbb{L}(\mathbf z,\mathbf z')
= \sqrt{\frac{1}{\kappa}}\,
\cosh^{-1}\!\bigl(-\kappa \langle \mathbf z, \mathbf z' \rangle_\mathbb{L}\bigr),
\quad \mathbf z,\mathbf z' \in \mathbb{L}^n_\kappa,
\end{equation}
which inducing Lorentzian norm $\|\mathbf z\|_\mathbb L = \langle \mathbf z,\mathbf z\rangle_{\mathbb L}$. Based on this geometry, any vector $\mathbf v \in T_{\mathbf z}\,\mathbb L_{\kappa}^{n}$ in the tangent space can be projected onto the hyperboloid using the exponential map~\cite{khrulkov2020hyperbolic}:
\begin{equation}
\exp^{\kappa}_{\mathbf z}(\mathbf v)
=
\cosh\!\big(\sqrt{\kappa}\|\mathbf v\|_\mathbb{L}\big)\mathbf z
+
\frac{\sinh\!\big(\sqrt{\kappa}\|\mathbf v\|_\mathbb{L}\big)}
{\sqrt{\kappa}\|\mathbf v\|_\mathbb{L}}
\mathbf v.
\end{equation}

\subsubsection{Hierarchical Entailment Learning}
We model human visual attention as a hierarchical process. Specifically, attention is refined from general attention to specific visual attention distribution as conditions are imposed. To formalize these hierarchical entailment relations, we introduce a \textbf{partial order relation} in hyperbolic space:
\begin{equation}
\mathbf z_{\text{img}}\preceq \mathbf z_{\text{txt}}\preceq \mathbf z_{\text{anc}},
\end{equation}
 where $\mathbf z_{\text{anc}}$ and $\mathbf z_{\text{txt}}$ denote the general attention and text prompts, respectively, with $\mathbf z_{\text{img}}$ serving as their specific instantiation within the visual input.
 We map the text embedding $\mathbf v_{\text{txt}}$ generated by the text encoder into the Lorentz manifold via the exponential map defined at the origin:
\begin{equation}
\mathbf z_{\text{txt}}
=
\exp^{\kappa}_{\mathbf o}(\mathbf v_{\text{txt}})
\in \mathbb L^n_\kappa.
\end{equation}
Similarly, we map the visual feature embedding $\mathbf v_{\text{img}}$ into the Lorentz manifold to obtain its hyperbolic representation $\mathbf z_{\text{img}}$, and introduce a learnable general attention anchor $\mathbf z_{\text{anc}}$. 

Based on these hyperbolic representations, we employ hyperbolic entailment cones~\cite{ganea2018hyperbolic,PalSDFGM2024} to transform hierarchical attention entailment relations into optimizable geometric constraints. As illustrated in Fig.~\ref{fig_overview} (b), entailment cones define a region $\mathfrak{R}_{\mathbf{z}_{\text{anc}}}$ for every possible point $\mathbf z_{\text{anc}}$ in the space such that all points $\mathbf z_{\text{txt}} \in \mathfrak{R}_{\mathbf{z}_{\text{anc}}}$ are semantically linked to $\mathbf z_{\text{anc}}$ as its child concepts. As such, points in $\mathfrak{R}_{\mathbf{z}_{\text{anc}}}$ are expected to contain specific condition for the general concept ($\mathbf z_{\text{txt}} \preceq \mathbf z_{\text{anc}}$). The half-aperture of these conical regions is formulated by~\cite{le2019inferring,desai2023hyperbolic} as: $\omega(\mathbf{z}) = \sin^{-1} \left( \frac{2K}{\sqrt{\kappa} \|\tilde{\mathbf{z}}\|} \right)$.

To learn partial orders in the Lorentz space, we employ a thresholded entailment loss based on angular residuals \cite{le2019inferring,desai2023hyperbolic}:
\begin{equation}
\mathcal L_{\text{ent}}^{*}(\mathbf z_{\text{anc}},\mathbf z_{\text{txt}})
=
\max\!\left(0,\ \phi(\mathbf z_{\text{anc}},\mathbf z_{\text{txt}})-\eta\,\omega(\mathbf z_{\text{anc}})\right),
\end{equation}
where $\eta$ is a threshold scaling factor used to adjust the tightness of the entailment constraint, $\phi(\mathbf z,\mathbf z')$ denotes the exterior angle of the child node deviating from the boundary of the parent entailment cone:
\begin{equation}
\phi(\mathbf z, \mathbf z')
=
\cos^{-1}
\left(
\frac{
z_0 + z'_0\,\kappa \langle \mathbf z, \mathbf z' \rangle_{\mathbb L}
}{
\lVert \tilde{\mathbf z'} \rVert
\sqrt{
\left( \kappa \langle \mathbf z, \mathbf z' \rangle_{\mathbb L} \right)^2 - 1
}
}
\right).
\end{equation}

Hence, the hierarchical attention entailment (HAE) loss of human attention would comprise both anchor-text conditional entailments and text-image entailments as:

\begin{equation}
\mathcal L_{\text{HAE}}
=
\mathcal L_{\text{ent}}^{*}(\mathbf z_{\text{anc}},\mathbf z_{\text{txt}})
+
\lambda\,
\mathcal L_{\text{ent}}^{*}(\mathbf z_{\text{txt}},\mathbf z_{\text{img}}),
\end{equation}
The final loss $\mathcal L_{\text{total}}$ \cite{droste2020unified} is composed of task and HAE loss:
\begin{equation}
\mathcal L_{\text{total}}
=
\mathcal L_{\text{KLD}} 
-
\mathcal L_{\text{CC}}
-
\mathcal L_{\text{SIM}}
+
\mathcal L_{\text{HAE}}.
\end{equation}
 $\mathcal L_{\text{KLD}}$,$\mathcal L_{\text{CC}}$ ,$\mathcal L_{\text{SIM}}$ represent the task losses for attention modeling, as detailed in Appendix \ref{sec:appendix_loss}

\subsection{Decoding Attention from Hyperbolic Manifolds}
\label{sec:decoder}
Building on the hierarchical entailment constraints regarding text specialization depth and cone positioning, the hyperbolic decoder employs scale modulation and spatial focusing to map hyperbolic geometry into Euclidean attention. The visual features $\mathbf X\in\mathbb R^{C\times H\times W}$ are thus adaptively modulated by $\mathbf z_{\mathrm{img}},\mathbf z_{\mathrm{txt}}$ (Architecture details in Appendix \ref{alg:decoder_flow}).

\textbf{\ding{182} Specialization depth-driven scale modulation.}
Given that geodesic distance in hyperbolic space characterizes semantic specialization, we define the \textit{specialization depth} of the condition as
$
r_{\mathrm{txt}}
=
d_{\mathbb L}(\mathbf z_{\mathrm{txt}},\mathbf o)$, which regulates the relative focus of the decoder between global structure and fine-grained local details. We introduce a set of operators $\{\mathcal S_k\}_{k=1}^{K}$ with $K$ levels alongside anchors $\boldsymbol\mu_k\in\mathbb L^n_\kappa$
to perform conditional weighted fusion of multi-scale features. The scale weights are determined by the hyperbolic distances between the text condition and scale anchor:
\begin{equation}
w_k
=
\mathrm{softmax}_k\!\left(
-d_{\mathbb L}(\mathbf z_{\mathrm{txt}},\boldsymbol\mu_k)
\right).
\end{equation}
The overall modulation intensity is governed by a monotonic function of the specialization depth,
$\alpha(r_{\mathrm{txt}})
=
\mathrm{softplus}(r_{\mathrm{txt}})$,
while the final scale response is computed as a weighted combination of the operators:
\begin{equation}
\mathbf X_s
=
\sum_{k=1}^{K} w_k\,\mathcal S_k(\mathbf X).
\end{equation}
\textbf{\ding{183} Relative geodesic direction-driven spatial focusing.} To characterize the relative semantic relationship between the condition and the visual instance, we define the relative geodesic direction in the tangent space at the origin:
\begin{equation}
\Delta
=
\log^\kappa_{\mathbf o}(\mathbf z_{\mathrm{img}})
-
\log^\kappa_{\mathbf o}(\mathbf z_{\mathrm{txt}}).
\end{equation}
 $\Delta$ captures the semantic deviation direction of the visual instance relative to the condition center, thereby providing semantic guidance for pixel-level spatial focusing.
 The aperture of the entailment cone corresponding to the text condition, denoted as $\omega(\mathbf z_{\mathrm{txt}})$, reflects its degree of semantic generalization and regulates the intensity of spatial focusing. 
We define the focusing temperature as
$\beta=\beta_0\big(\omega(\mathbf z_{\mathrm{txt}})+\varepsilon\big)$, where $\beta_0$ is a temperature scaling hyperparameter. A larger cone aperture corresponds to a more generalized semantic condition, inducing a broader spatial attention pattern.
 Let $\mathbf u_{i,j}$ denote the channel-wise feature vector of $\mathbf X$ at position $(i,j)$; the spatial weight is defined by its consistency with the relative geodesic direction $\Delta$:
\begin{equation}
m_{i,j}
=
\mathrm{softmax}_{i,j}\!\left(
\frac{
\langle \mathbf u_{i,j},\Delta\rangle
}{
\|\mathbf u_{i,j}\|\,\|\Delta\|+\varepsilon
}
\cdot
\frac{1}{\beta}
\right).
\end{equation}

\textbf{\ding{184} Joint decoding.}
Scale selection and spatial focusing are integrated via a unified residual formulation:
\begin{equation}
\mathbf X'
=
\mathbf X
+
\alpha(r_{\mathrm{txt}})
\Big(
\mathbf M \odot \mathbf X_s
\Big),
\end{equation}
where $\mathbf M=\{m_{i,j}\}$ denotes the attention map induced by the relative geodesic direction. This residual fusion ensures that the hyperbolic entailment structure is consistently preserved within the pixel-wise attention distribution.

\subsection{Fokker--Planck Dynamics (FPD) Modeling}
\label{sec:FPD}
For video sequences of \textbf{arbitrary length}, let $u^{\text{obs}}_t$ denote the attention distribution derived from the visual encoder output $\mathbf v_{\text{img}}$, defined on the discrete spatial domain $\Omega = \{1,\dots,H\} \times \{1,\dots,W\}$ and subject to a normalization constraint. We introduce the Fokker--Planck (FP) dynamics equation to model the temporal evolution of attention:
\begin{equation}
    \frac{\partial u}{\partial t} = \underbrace{-\nabla_{\tau} \cdot (\mathbf{v} u)}_{\mathcal{A}_{\text{drift}}} + \underbrace{\nabla_{\tau} \cdot (D \nabla_{\tau} u)}_{\mathcal{A}_{\text{diffusion}}} + \underbrace{\lambda (u^{\text{obs}} - u)}_{\mathcal{A}_{\text{correction}}},
\end{equation}
where the operators $\nabla_{\tau}$ are defined along the temporal axis, characterizing attention drifting, smoothing, and correction, respectively. The parameter $\lambda$ balances the dynamic prediction with the initial information.
We employ the Lie--Trotter operator splitting scheme to discretize the FP dynamics into sequential sub-operator updates within a time step $\Delta t$:
\begin{equation}
    u(t+\Delta t) \approx
    ( 
    \mathcal{S}_{\text{drift}} \circ
    \mathcal{S}_{\text{diffusion}} \circ 
    \mathcal{S}_{\text{correction}} \circ
    \mathcal{P}_{\text{proj}} 
    ) [u(t)].
\end{equation}
We denote by $u_t^{(k)}$ the intermediate state after applying the $k$-th sub-operator, with $u_t^{(0)} \equiv u_t$.

\subsubsection{Drift Evolution Operator: $\mathcal{S}_{\text{drift}} $}
The drift equation is given by $\frac{\partial u}{\partial t} = -\nabla_{\tau} \cdot (\mathbf{v} u)$. To simulate this physical process within a discrete feature space, we utilize bidirectional temporal self-attention to parameterize the drift propagator $A_x$.
We define the discrete transition kernel $A_x(t \leftarrow t')$ from source time $t'$ to target time $t$ as:
\begin{equation}
    A_x^{(h)}(t \leftarrow t') = \frac{\exp\left( \langle q_t^{(h)}(x), k_{t'}^{(h)}(x) \rangle / (\sqrt{d}\beta)) \right)}{\sum_{\tau'=1}^{T} \exp\left( \langle q_t^{(h)}(x), k_{\tau'}^{(h)}(x) \rangle / (\sqrt{d}\beta) \right)}.
\end{equation}
Here, the transition kernel $A_x$ is normalized along the temporal axis to facilitate cross-time information aggregation, constituting a Markov transition \zwz{(Detailed specifications are provided in Appendix \ref{sec:fpd_details})}. Following the Lagrangian transport, $\tilde{u}_t$ denotes the aggregated state derived from information integration across the full temporal domain:
\begin{equation}
    \tilde{u}_{t}(x) = \sum_{t'=1}^{T} A_x(t \leftarrow t') u_{t'}(x).
    \label{eq:transport_target}
\end{equation}
Subsequently, the drift operator $\mathcal{S}_{\text{drift}}$ adopts a residual Euler update scheme to evolve the state $u_t^{(0)}$ into the first-stage intermediate state $u_t^{(1)}$:
\begin{equation}
    u_t^{(1)}(x)
    = u_t^{(0)}(x)
    + \Delta t \cdot \big( \tilde{u}_{t}(x) - u_t^{(0)}(x) \big).
    \label{eq:drift_update}
\end{equation}

The full-temporal attention mechanism enables cross-time information backflow, effectively mitigating the issue of insufficient motion cues in the early stages.

\subsubsection{Diffusion Operator: $\mathcal{S}_{\text{diffusion}}$}
To regularize the high-frequency noise introduced during the drift process, we employ a second-order central finite difference approximation based on the intermediate state $u_t^{(1)}$ to compute the temporal diffusion term:
\begin{equation}
    u_t^{(2)}(x)
= u_t^{(1)}(x)
+ \nu_t(x)\Delta t
\frac{
u^{(1)}_{t-1}(x)-2u^{(1)}_{t}(x)+u^{(1)}_{t+1}(x)
}{(\Delta t)^2}.
    \label{eq:diffusion_update}
\end{equation}
This second-order central difference approximation performs temporal smoothing, where the learnable intensity $\nu_t(x)$ adapts by reducing diffusion in dynamic regions to preserve edges and increasing it in static areas for stability.

\subsubsection{Correction Operator: $\mathcal{S}_{\text{correction}}$}
Finally, we employ a relaxed Euler scheme to refine the prediction based on the original state:
\begin{equation}
    u_t^{(3)}(x)
    = (1 - \lambda_t(x)) u_t^{(2)}(x)
    + \lambda_t(x) u^{\text{obs}}_{t}(x).
    \label{eq:reaction_update}
\end{equation}
The coefficient $\lambda_t(x) \in [0, 1]$ is a learnable gating parameter activated via a Sigmoid function. 
This term functions analogously to a Kalman gain, dynamically correcting between the dynamic prediction $u^{(2)}_{t}$ and the global static features $u^{\text{obs}}_{t}$, thereby suppressing error accumulation. To address simplex deviations caused by the correction term, we project $u^{(3)}_t$ via $\mathcal{P}_{\text{proj}}$ to ensure numerical stability for the next iteration.
\begin{equation}
    u_{t}(x) \leftarrow \frac{u_t^{(3)}(x) + \epsilon}
{\sum_{x' \in \Omega} (u_t^{(3)}(x') + \epsilon)}.
    \label{eq:projection}
\end{equation}

\section{Experiments}
\subsection{Experimental Setup}
\paragraph{Datasets.}
AAM is trained on Attention-1.75M, a standardized corpus unifying over 1.75M fixation instances across 8 image, 4 video, and 6 audio-visual datasets. Dataset specifications and textual conditions are \zwz{provided in Appendix~\ref{app:datasets}.} Covering diverse scenarios, Attention-1.75M substantially exceeds the training scale of existing methods.

\paragraph{Implementation Details.}
AAM is implemented with the NPU-PyTorch CANN on four Ascend Snt9B3 NPUs. 
All inputs are resized to $448 \times 448$. 
We employ a phased training strategy: 
AAM is first trained on image and video data (with a sampling ratio of 4:6), 
during which the general attention anchor $z_{\mathrm{anc}}$ is \textbf{warm-started} by free-viewing datasets. Audio-visual data is then added after 10 epochs 
(\zwz{Detailed configuration in Appendix~\ref{tab:hyperparams_training}}).

  \definecolor{headbg}{RGB}{220, 220, 235}    
  \definecolor{methodbg}{RGB}{242, 242, 242}  
  \definecolor{bestbg}{RGB}{255, 245, 210}

\begin{table*}[t]
  \definecolor{highlightbg}{RGB}{250, 240, 240}
  \definecolor{mylavender}{HTML}{DBD9E4}
  \caption{Quantitative comparison on \textbf{image} datasets. The best results are shown in \textcolor{bestred}{\textbf{red}}, and the second best in \textcolor{secondblue}{\textbf{blue}}.}
  \centering
  \footnotesize 
  \setlength{\tabcolsep}{2.6pt}
  \renewcommand{\arraystretch}{1}

  \newcommand{\red}[1]{\textcolor{bestred}{\textbf{#1}}}
  \newcommand{\blue}[1]{\textcolor{secondblue}{\textbf{#1}}}
  
  \begin{tabular}{lcccc|lcccc}
    \noalign{\hrule height 1pt}
    \rowcolor{mylavender}
    \multicolumn{10}{c}{\textbf{\faImage~~Image Attention Modeling}} \\
    \hline
    \rowcolor{mylavender}
    \textbf{Method} & \textbf{CC} $\uparrow$ & \textbf{KLD} $\downarrow$ & \textbf{AUC} $\uparrow$ & \textbf{SIM} $\uparrow$ & 
    \textbf{Method} & \textbf{CC} $\uparrow$ & \textbf{KLD} $\downarrow$ & \textbf{AUC} $\uparrow$ & \textbf{SIM} $\uparrow$ \\
    \noalign{\hrule height 1pt}

    
    \multicolumn{5}{l}{\cellcolor{gray!5}\textit{\textbf{Dataset: MIT1003 (Natural)}}} & 
    \multicolumn{5}{l}{\cellcolor{gray!5}\textit{\textbf{Dataset: U-EYE (Web page)}}} \\
    \hline
    
    
    UNISAL~\cite{droste2020unified} & 
    0.734 & 1.014 & 0.902 & 0.597 & 
    TransalNet~\cite{lou2022transalnet} & 0.696 & 0.616 & 0.839 & 0.598 \\

    
    TransalNet~\cite{lou2022transalnet} & 0.722 & 0.660 & 0.903 & 0.592 & 
    UMSI++~\cite{jiang2023ueyes} & 0.670 & 0.860 & 0.830 & 0.580 \\
    SUM~\cite{hosseini2025sum} & \blue{0.768} & \blue{0.563} & \blue{0.913} & \blue{0.630} & 
    SUM~\cite{hosseini2025sum} & \blue{0.731} & \blue{0.544} & \blue{0.846} & \blue{0.630} \\
    
    \rowcolor{rowpink}
    AAM (Ours) & \red{0.831} & \red{0.446} & \red{0.923} & \red{0.674} & 
    AAM (Ours) & \red{0.743} & \red{0.524} & \red{0.847} & \red{0.635} \\
    \hline

    \multicolumn{5}{l}{\cellcolor{gray!5}\textit{\textbf{Dataset: CAT2000 (Natural)}}} & 
    \multicolumn{5}{l}{\cellcolor{gray!5}\textit{\textbf{Dataset: SalECI (E-Commercial)}}} \\
    \hline
    
    
    
    UNISAL~\cite{droste2020unified} & 0.842 & 0.530 & 0.876 & 0.721 & 
    EML-NET~\cite{jiang2023ueyes} & 0.510 & 1.220 & 0.807 & 0.536 \\
    
    
    
    TransalNet~\cite{lou2022transalnet} & 0.877 & 0.287 & 0.882 & 0.744 & 
    Hosseini~\cite{hosseini2025brand} & 0.750 & 0.578 & \blue{0.892} & 0.645 \\
    SUM~\cite{hosseini2025sum} & \blue{0.882} & \blue{0.270} & \blue{0.888} & \blue{0.754} & 
    SUM~\cite{hosseini2025sum} & \blue{0.789} & \blue{0.473} & \red{0.899} & \red{0.680} \\
    
    \rowcolor{rowpink}
    AAM (Ours) & \red{0.906} & \red{0.235} & \red{0.890} & \red{0.769} & 
    AAM (Ours) & \red{0.797} & \red{0.450} & \red{0.899} & \blue{0.678} \\
    \hline

    \multicolumn{5}{l}{\cellcolor{gray!5}\textit{\textbf{Dataset: SALICON (Natural)}}} & 
    \multicolumn{5}{l}{\cellcolor{gray!5}\textit{\textbf{Dataset: OSIE (Natural)}}} \\
    \hline
    
    
    
    
    TransalNet~\cite{lou2022transalnet} & 0.890 & 0.220 & 0.867 & 0.783 & 
    TransalNet~\cite{jiang2023ueyes} & 0.791 & 0.667 & 0.923 & 0.651 \\
    
    Temp-Sal~\cite{aydemir2023tempsal} & \blue{0.911} & 0.195 & \blue{0.869} & 0.800 & 
    UniAR~\cite{li2023uniar} & 0.754 & 0.547 & 0.867 & 0.647 \\
    SUM~\cite{hosseini2025sum} & 0.909 & \blue{0.192} & \red{0.876} & \blue{0.804} & 
    SUM~\cite{hosseini2025sum} & \blue{0.861} & \blue{0.340} & \blue{0.924} & \blue{0.727} \\
    
    \rowcolor{rowpink}
    AAM (Ours) & \red{0.925} & \red{0.163} & \red{0.876} & \red{0.819} & 
    AAM (Ours) & \red{0.901} & \red{0.243} & \red{0.933} & \red{0.760} \\
    \noalign{\hrule height 1pt}
  \end{tabular}
  \vspace{-1mm}
  \label{table_image}
\end{table*}

\begin{table*}[t]
  \caption{Quantitative comparison on \textbf{audio-visual} datasets. The best results are shown in \textcolor{bestred}{\textbf{red}}, and the second best in \textcolor{secondblue}{\textbf{blue}}.}
  \centering
  \footnotesize 
  \definecolor{highlightbg}{RGB}{250, 240, 240}
  \setlength{\tabcolsep}{1.8pt} 
  \renewcommand{\arraystretch}{1}
  \definecolor{mylavender}{HTML}{DBD9E4}
  \newcommand{\red}[1]{\textcolor{bestred}{\textbf{#1}}}
  \newcommand{\blue}[1]{\textcolor{secondblue}{\textbf{#1}}}
  
  \begin{tabular}{l|ccc|ccc|ccc|ccc|ccc}
    \noalign{\hrule height 1pt}
    \rowcolor{mylavender}
    \multicolumn{16}{c}{\textbf{\faVolumeUp~\faFilm~~Audio-Video Attention Modeling}} \\
    \hline
    
    \rowcolor{mylavender}
    & 
    \multicolumn{3}{c|}{\textbf{DIEM}} & 
    \multicolumn{3}{c|}{\textbf{ETMD}} & 
    \multicolumn{3}{c|}{\textbf{SumMe}} & 
    \multicolumn{3}{c|}{\textbf{Coutrot1}} & 
    \multicolumn{3}{c}{\textbf{Coutrot2}} \\
    
    \hhline{>{\arrayrulecolor{mylavender}}->{\arrayrulecolor{black}}|---|---|---|---|---}
    \rowcolor{mylavender}
    
     \multirow{-2}{*}{\textbf{Method}} & 
     \textbf{CC}$\uparrow$ & \textbf{NSS}$\uparrow$ & \textbf{AUC}$\uparrow$
     & \textbf{CC}$\uparrow$ & \textbf{NSS}$\uparrow$ & \textbf{AUC}$\uparrow$
     & \textbf{CC}$\uparrow$ & \textbf{NSS}$\uparrow$ & \textbf{AUC}$\uparrow$
     & \textbf{CC}$\uparrow$ & \textbf{NSS}$\uparrow$ & \textbf{AUC}$\uparrow$
     & \textbf{CC}$\uparrow$ & \textbf{NSS}$\uparrow$ & \textbf{AUC}$\uparrow$ \\

    \noalign{\hrule height 1pt}

    
    
    CASP~\cite{xiong2023casp} & 
    0.655 & 2.61 & 0.906 & 
    \blue{0.620} & \blue{3.34} & \blue{0.940} & 
    0.499 & \blue{2.60} & \blue{0.907} & 
    0.561 & 2.65 & 0.889 & 
    0.788 & 6.34 & \blue{0.963} \\
    DAVS~\cite{zhu2024discrete} & 
    0.580 & 2.29 & 0.884 & 
    0.600 & 2.96 & 0.932 & 
    0.423 & 2.29 & 0.889 & 
    0.482 & 2.19 & 0.869 & 
    0.734 & 4.98 & 0.960 \\
    
    MSPI~\cite{xie2024audio} & 
    0.653 & 2.62 & 0.907 & 
    0.601 & 3.24 & 0.937 & 
    0.482 & 2.49 & 0.901 & 
    0.567 & 2.76 & \blue{0.895} & 
    0.783 & 6.28 & \blue{0.963} \\
    TAVDiff~\cite{yu2025text} & 
    \blue{0.670} & \blue{2.75} & \blue{0.909} & 
    0.613 & 3.15 & 0.937 & 
    \blue{0.500} & 2.51 & 0.904 & 
    \blue{0.607} & \blue{2.85} & 0.892 & 
    \blue{0.798} & \blue{6.52} & \blue{0.963} \\
    
    \rowcolor{rowpink}
    AAM (Ours) & 
    \red{0.710} & \red{2.88} & \red{0.919} & 
    \red{0.655} & \red{3.66} & \red{0.945} & 
    \red{0.550} & \red{2.90} & \red{0.920} & 
    \red{0.626} & \red{3.22} & \red{0.911} & 
    \red{0.887} & \red{7.46} & \red{0.971} \\
    
    \noalign{\hrule height 1pt}
  \end{tabular}
  \vspace{-2mm}
  \label{table_audio}
\end{table*}

\begin{table*}[t]
  \caption{Quantitative comparison on \textbf{video} datasets. The best results are shown in \textcolor{bestred}{\textbf{red}}, and the second best in \textcolor{secondblue}{\textbf{blue}}.}
  \centering
  \footnotesize 
  \definecolor{highlightbg}{RGB}{250, 240, 240}
  \setlength{\tabcolsep}{4.25pt} 
  \renewcommand{\arraystretch}{1}
  \definecolor{mylavender}{HTML}{DBD9E4}
  \newcommand{\red}[1]{\textcolor{bestred}{\textbf{#1}}}
  \newcommand{\blue}[1]{\textcolor{secondblue}{\textbf{#1}}}

  \begin{tabular}{l|cccc|cccc|cccc}
    \noalign{\hrule height 1pt}
    \rowcolor{mylavender}

    \multicolumn{13}{c}{\textbf{\faFilm~~Video Attention Modeling}} \\
    \hline
    \rowcolor{mylavender}

    &
    \multicolumn{4}{c|}{\textbf{DHF1K (Natural)}} & 
    \multicolumn{4}{c|}{\textbf{Hollywood2 (Movies)}} & 
    \multicolumn{4}{c}{\textbf{UCF (Sports)}} \\

    \hhline{>{\arrayrulecolor{mylavender}}->{\arrayrulecolor{black}}|----|----|----}

    \rowcolor{mylavender}
    
    \multirow{-2}{*}{\textbf{Method}} & 
    \textbf{AUC}$\uparrow$ & \textbf{SIM}$\uparrow$ & \textbf{CC}$\uparrow$ & \textbf{NSS}$\uparrow$ & 
    \textbf{AUC}$\uparrow$ & \textbf{SIM}$\uparrow$ & \textbf{CC}$\uparrow$ & \textbf{NSS}$\uparrow$ & 
    \textbf{AUC}$\uparrow$ & \textbf{SIM}$\uparrow$ & \textbf{CC}$\uparrow$ & \textbf{NSS}$\uparrow$ \\
    
    \noalign{\hrule height 1pt}
    
    
    

    UNISAL~\cite{droste2020unified} & 
    0.901 & 0.390 & 0.490 & 2.776 & 
    0.934 & 0.542 & 0.673 & 3.380 & 
    0.917 & 0.498 & 0.636 & 3.189 \\
    
    
    
    VSSM~\cite{lu2023visual} & 
    \blue{0.915} & 0.383 & 0.521 & 3.027 & 
    0.939 & \blue{0.583} & \blue{0.729} & 3.927 & 
    \blue{0.936} & \blue{0.560} & \blue{0.705} & \red{3.908} \\

    MSFF-Net~\cite{zhang2023multi} & 
    0.913 & 0.392 & \blue{0.534} & \blue{3.066} & 
    \blue{0.940} & 0.574 & 0.723 & 3.952 & 
    0.933 & 0.557 & 0.698 & 3.769 \\
    
    TFS-Net~\cite{li2025tfs} & 
    0.912 & \blue{0.412} & 0.527 & 2.953 & 
    0.934 & 0.580 & 0.725 & \blue{3.953} & 
    0.930 & 0.558 & 0.664 & 3.653 \\
    
    
    \rowcolor{rowpink}
    AAM (Ours) & 
    \red{0.919} & \red{0.421} & \red{0.563} & \red{3.272} & 
    \red{0.944} & \red{0.599} & \red{0.742} & \red{4.055} & 
    \red{0.943} & \red{0.584} & \red{0.736} & \blue{3.892} \\
    
    \noalign{\hrule height 1pt}
  \end{tabular}
  \label{table_video}
  \vspace{-1mm}
\end{table*}

\begin{table}[t]
  \caption{Comparison of complexity and efficiency metrics including Backbone, Input Length
  (\textbf{Fixed}/\textbf{Arbi}trary), Inference Speed (FPS), and \textbf{Trainable Parameters}.}
  \label{tab:efficiency_comparison}
  \centering
  \footnotesize
  \definecolor{highlightbg}{RGB}{250, 240, 240}
  \setlength{\tabcolsep}{2pt}
  \renewcommand{\arraystretch}{1}
  
  \newcommand{\red}[1]{\textcolor{bestred}{\textbf{#1}}}
  \newcommand{\blue}[1]{\textcolor{secondblue}{\textbf{#1}}}
  \definecolor{mylavender}{HTML}{DBD9E4}

  \begin{tabular}{l|c|c|cc}
  \noalign{\hrule height 1pt}

  \rowcolor{mylavender}

  & & & \textbf{FPS} & \textbf{Param} \\

  \rowcolor{mylavender}
  
  \multirow{-2}{*}{\textbf{Method}} & \multirow{-2}{*}{\textbf{Backbone}} & \multirow{-2}{*}{\textbf{Input}} & \scriptsize{(img/s)} & \scriptsize{(M)} \\ 
  
  \noalign{\hrule height 1pt}
    
    TASED~\cite{min2019tased} & 
    3D Conv & Fixed & 17 & \blue{82} \\
    
    STSANet~\cite{wang2021spatio} & 
    Video Swin & Fixed & 28 & 643 \\
    TMFI-Net~\cite{zhou2023transformer} & 
    Video Swin & Fixed & \blue{30} & 234 \\
    
    \rowcolor{rowpink}
    AAM (Ours) & 
    DINOv3 & Arbi & \red{111} & \red{21.4} \\
    
    \noalign{\hrule height 1pt}
  \end{tabular}
  \label{table_fps}
  \vspace{-1mm}
\end{table}

\paragraph{Baselines.}
We compare our foundation model with state-of-the-art attention modeling methods including (i) scene-dependent, task-specific models and (ii) jointly trained models with parameter isolation across partial datasets, as discussed in Section \ref{sec:introduction} (e.g., SUM and UNISAL).

\subsection{Performance Analysis}
Extensive experiments across 16 datasets demonstrate that AAM consistently outperforms SOTA methods using a single model. As in prior work~\cite{hosseini2025sum}, we use the evaluation metrics AUC-Judd (AUC), Similarity Metric (SIM), Linear Correlation Coefficient (CC), and Normalized Scanpath Saliency (NSS).
Specifically, averaging improvements across all evaluation metrics, we achieve average metric gains of 5.2\%, 5.8\% (including LEDOV in Appendix~\ref{tab:ledov_benchmark}), and 6.0\% on image (Table~\ref{table_image}), audio-visual (Table~\ref{table_audio}), and video (Table~\ref{table_video}) tasks, respectively \zwz{(see Appendix \ref{app_comparison} for detailed comparison)}. Notably, diverging from existing methods that rely on computationally intensive multi-to-single-frame paradigms, AAM employs an efficient frame-wise prediction via FPD. This architectural distinction not only secures robust performance within a unified framework but also eliminates inherent redundancy. Consequently, AAM achieves an approximate $4\times$ inference speedup compared to fixed-window methods (Table~\ref{table_fps}), with only 21M trainable parameters (324M total). Qualitative visual comparisons are presented in Fig.~\ref{fig_visual}.
\begin{figure}[!t]
    \centering
    \captionsetup{skip=5pt}
    \includegraphics[width=1\linewidth]{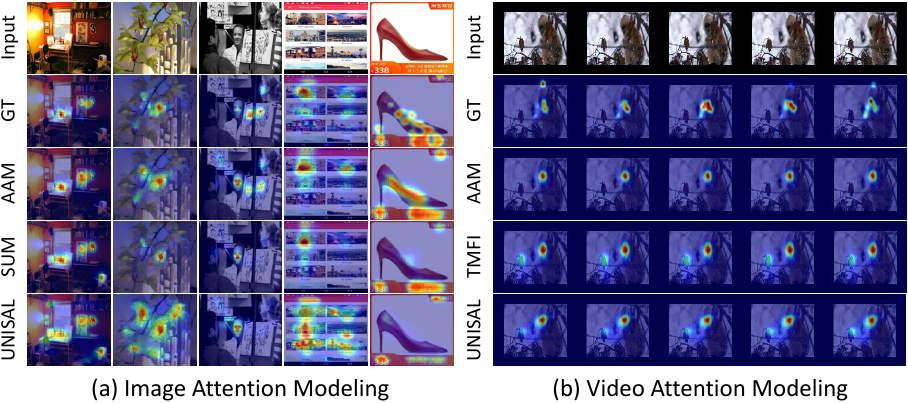}
    \caption{Visual comparison against SOTA methods.
    }
    \label{fig_visual}
    \vspace{-1mm}
\end{figure}
\subsection{Ablation Study}
\subsubsection{Analysis of Joint Training}
As shown in Fig.~\ref{fig_ab_3}, all results are averaged over datasets. Under our hierarchical conditional modeling setting, we conduct a systematic ablation study to evaluate the effect of unified multimodal joint training.
For \textbf{image} joint training (A), A1 evaluates cross-dataset generalization trained on single-dataset. A2 is the in-domain single-dataset baseline,
where the model is trained and evaluated separately on each dataset. A3 performs full images joint training, and A4 further incorporates video data for cross-modal joint training.
For \textbf{audio-visual} training (B), B1 is the in-domain single-dataset baseline, B2 adds video data, and B3 applies full multimodal unified training, leading to consistent improvements.
For \textbf{video} joint training (C), C1 is the in-domain single-dataset baseline, C2 adds video data, C3 combines video and image data, and C4 trains on all data jointly, yielding stable gains across benchmarks. Overall, these results support our hypothesis that unified multimodal training enhances generalization under hierarchical attention modeling.

\subsubsection{Component Ablation}
We conduct a systematic ablation study on the proposed components.
For the backbone ablation (D), we compare different visual encoder configurations, including DINOv3 (D1--D3, small$\rightarrow$large) and SAM2 \cite{ravi2024sam} (D4 base, D5 large). The results indicate that stronger self-supervised visual representations further improve overall performance.
For the temporal modeling ablation (E), the model progresses from removing the temporal module entirely (E1), to adopting a standard time-dimension self-attention \cite{chen2025video} (E2), and finally to our proposed FPD temporal module (E3/E4), which uses 16-frame and 32-frame clips as input, respectively. The full temporal design achieves the best performance, supporting our motivation to model video attention as a continuous evolution over a spatiotemporal manifold.
For the hyperbolic representation ablation (F), performance consistently improves from removing hyperbolic learning (F1), to introducing a hyperbolic loss constraint (F2), and further to incorporating a hyperbolic decoder enhancement (F3), with more pronounced gains in complex scenarios exhibiting stronger hierarchical structure. In contrast, comparisons with the joint-training settings in (A) suggest that naive direct joint training may introduce domain conflicts and lead to degraded performance.

\begin{figure*}[!t]
    \centering
    \captionsetup{skip=5pt}
    \includegraphics[width=1\linewidth]{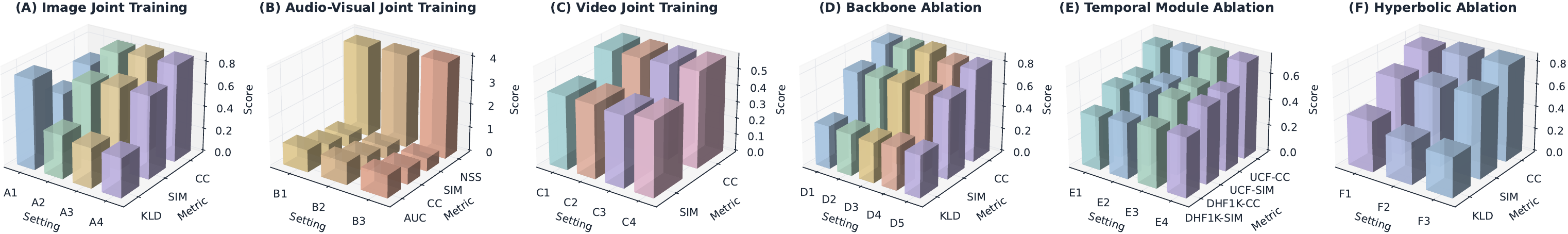}
    \caption{Ablation experiments: (A) Different image training settings, (B) Different audio-visual training settings, (C) Different video training settings, (D) Different backbone settings, (E) Temporal module variants, (F) Hyperbolic component ablations.}
    \label{fig_ab_3}
    \vspace{-1mm}
\end{figure*}

\begin{figure*}[!t]
    \centering
    \captionsetup{skip=5pt}
    \includegraphics[width=0.9\linewidth]{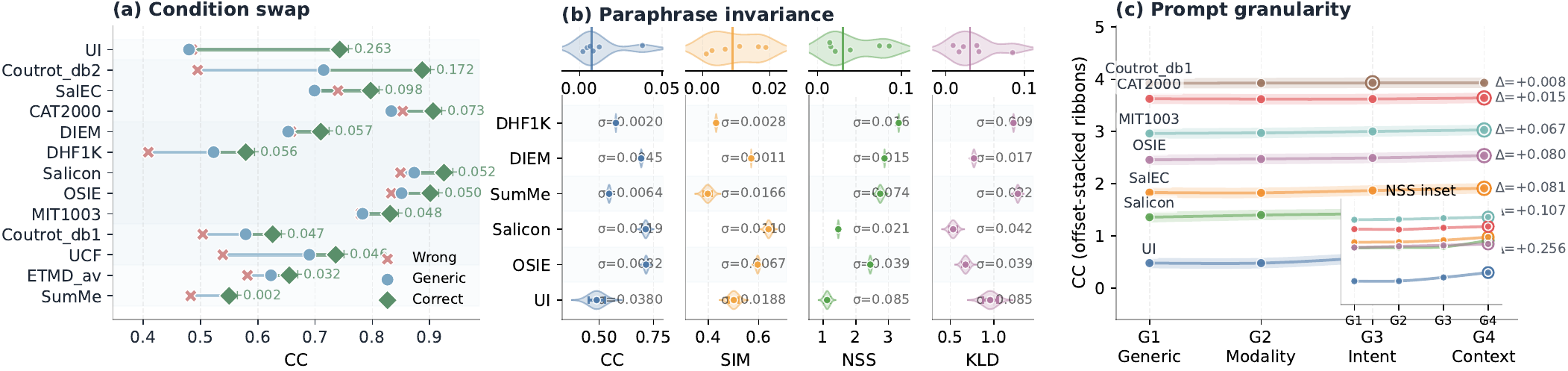}
    \caption{Inductive analysis of conditional attention modulation:
(a) Condition swap,
(b) Paraphrase invariance,
(c) Prompt granularity.}
    \label{fig_ab_2}
    \vspace{-1mm}
\end{figure*}

\subsubsection{Inductive Analysis}
To analyze whether AAM effectively models the hierarchical attention of cognitive modulation, we conduct a series of ablation studies under varying semantic conditions, as shown in Fig.~\ref{fig_ab_2} \zwz{(see Appendix~\ref{app:results} for detailed setups}). We assess the cognitive regulation from three perspectives:

\textbf{\ding{182} Condition swap.} 
We evaluate three configurations: Correct (task-aligned), Generic (general attention), and Wrong (mismatched tasks). We observe a strict performance hierarchy: \textit{Correct} $>$ \textit{Generic} $>$ \textit{Wrong}, despite identical visual inputs. Such sensitivity to condition swapping demonstrates that AAM treats attention as a dynamic cognitive process rather than a static, stimulus-driven pixel mapping.
\textbf{\ding{183} Paraphrase invariance.} To verify that AAM responds to semantic content rather than specific lexical cues, we generated multiple paraphrases for each prompt, varying in wording and syntactic structure. The model exhibits invariance to surface-level linguistic variations, evidenced by the negligible performance variance (CC standard deviation $<$ 0.01) across paraphrases. Such robustness indicates that AAM captures the underlying semantics of cognitive cues instead of being overfitted to specific phrasing. \textbf{\ding{184} Prompt granularity.} We refine prompts from general (G1) to highly specific (G4): (general$\rightarrow$modality$\rightarrow$viewing intention$\rightarrow$detailed context) to probe hierarchical attention. As shown in Fig.~\ref{fig_ab_2} (c), performance on various tasks improves with granularity, whereas dynamic and task-driven datasets plateau early. This diminishing marginal utility mirrors cognitive neuroscience findings, where dominant task contexts often override fine-grained linguistic nuances.

\noindent \textbf{Zero-Shot Generalization.}  We evaluate the zero-shot generalization of our AAM on AVAD, which is unseen during training, and on LEDOV, which is trained \textbf{without text prompts}. On AVAD, AAM achieves strong performance when provided with appropriate text prompts, indicating high transferability of the learned attention representations. For LEDOV, our method also yields substantial performance gains without prompt input (\zwz{Appendix \ref{tab:zero_shot_generalization} for details}).
\begin{figure}[!t]
    \centering
    \captionsetup{skip=5pt}
    \includegraphics[width=0.8\linewidth]{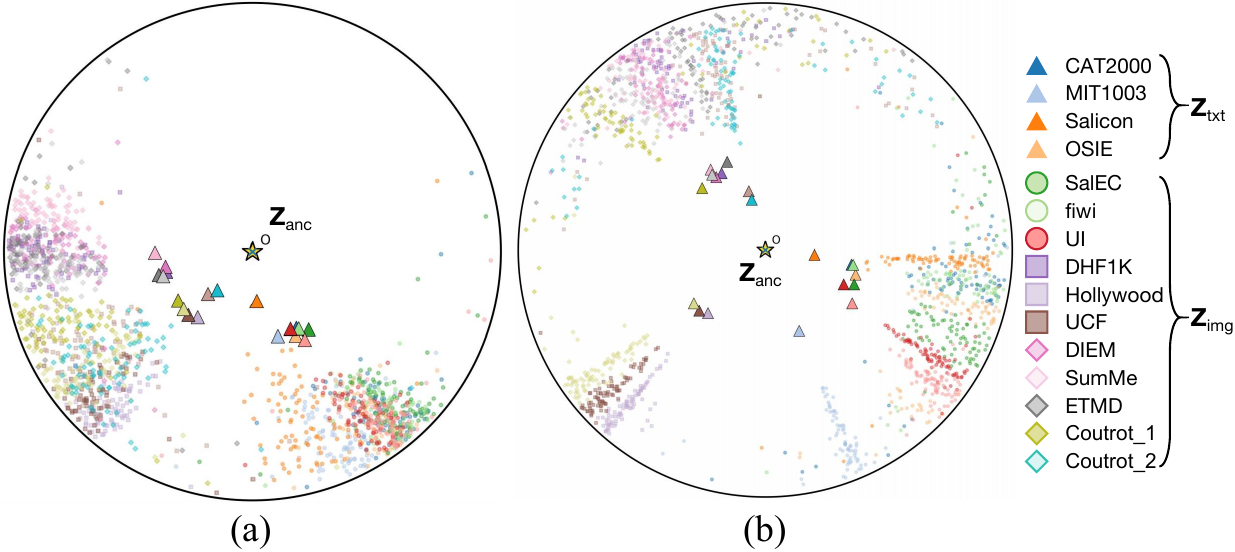}
    \caption{The learned hyperbolic representations: (a) HoroPCA \cite{chami2021horopca} and (b) CO-SNE \cite{guo2022co} visualizations of the latent space in $\mathbb{L}^2$.}
    \label{fig_ab_1}
    \vspace{-1mm}
\end{figure}

\noindent \textbf{Geometric Transfer Explanation.}
Fig.~\ref{fig_ab_1} suggests that AAM organizes attention modulation on a shared hyperbolic manifold rather than independent task predictors.
Generic prompts remain close to the origin, while increasingly specific task intents extend outward along hierarchical branches, providing a geometric explanation for strong zero-shot generalization across semantic conditions.

\section{Conclusions}
This paper argues that the persistent generalization failure in attention modeling is not a capacity issue, but a formulation issue: although human attention follows a unified cognitive mechanism, existing models fragment it into task-, scene-, and modality-specific problems.
We address this mismatch by reframing attention variation as a hierarchical entailment process, modeling how general priors progressively specialize into task intent. Embedding this structure in hyperbolic space and modeling temporal evolution as continuous transport provide a unified geometric and physical interpretation of attention across space and time.
Beyond performance gains, our results suggest that effective attention modeling requires explicit structure and dynamics, rather than isolated predictors. We hope this perspective encourages cognitively grounded foundation models that treat attention as a coherent process underlying visual perception.

\section*{Acknowledgments}
This research was supported by HUAWEI’s Al Hundred Schools Program and was carried out using the Ascend AI technology stack.

\section*{Impact Statement}
\textbf{Ethical Considerations.} We believe that our proposed AAM raises no ethical concerns regarding its motivation, design, implementation, or data usage. The method is designed to provide a unified foundation model for modeling human attention with cross-modal and cross-scene generalization while adhering to ethical guidelines in AI research.

\textbf{Societal Implications.} AAM introduces a unified perspective on human attention modeling. Unlike existing task-specific and scene-dependent methods, it captures hierarchical aspects of human attention in a cognitively motivated manner. This perspective facilitates application to real-world scenarios, improving inference efficiency while preserving output quality. Moreover, AAM provides theoretical insights and empirical evidence for the future development of holistic visual perception systems, and can serve as a basis for downstream tasks related to attention modeling and visual saliency.

\nocite{langley00}

\bibliography{example_paper}
\bibliographystyle{icml2026}

\newpage
\appendix
\onecolumn

\begin{tcolorbox}[
    enhanced,
    colback=tocbg, 
    colframe=tocborder,
    arc=3mm, 
    boxrule=0.5pt,
    left=5mm, right=5mm, top=5mm, bottom=5mm,
    shadow={2mm}{-2mm}{0mm}{black!10}
]
    \renewcommand{\baselinestretch}{1.2}\normalsize 
    \textbf{\Large Contents}
    \vspace{0.5em}
    \hrule height 0.5pt
    \vspace{0.8em}

    \begin{tabular}{@{} p{2.2cm} p{0.78\linewidth} @{}}
        
        \textbf{\color{black!60} Appendix A} & {\large \bfseries \sffamily Implementation Details} \dotfill Page \pageref{app:implementation} \\[-1em] 
        & {\small \color{black!80} Hardware infrastructure (Huawei Ascend), training hyperparameters, and specific configurations for the \textbf{Visual Backbone (DINOv3 + LoRA)}.} \\[3em] 
        
        \textbf{\color{black!60} Appendix B} & {\large \bfseries \sffamily Dataset Details \& Texts} \dotfill Page \pageref{app:datasets} \\[-1em]
        & {\small \color{black!80} Detailed specifications of the \textbf{Attention-1.75M} dataset and the \textbf{conditional text prompts} for image, video, and audio-visual tasks.} \\[3em]

        \textbf{\color{black!60} Appendix C} & {\large \bfseries \sffamily Theoretical Proofs} \dotfill Page \pageref{app:theory} \\[-1em]
        & {\small \color{black!80} Proof of Proposition regarding \textbf{Discrete Mass Conservation} and stability analysis of the drift-diffusion dynamics.} \\[3em]

        \textbf{\color{black!60} Appendix D} & {\large \bfseries \sffamily Methodology Details} \dotfill Page \pageref{app:methodology} \\[-1em]
        & {\small \color{black!80} Mathematical definitions of \textbf{Loss Functions} ($\mathcal L_{\text{KLD}}$, $\mathcal L_{\text{CC}}$, $\mathcal L_{\text{SIM}}$); architectural details of the \textbf{Hyperbolic Decoder} and \textbf{Audio Fusion Module}; specifications of the \textbf{Drift Evolution Operator}.} \\[3em]
        
        \textbf{\color{black!60} Appendix E} & {\large \bfseries \sffamily Additional Results} \dotfill Page \pageref{app:results} \\[-1em]
        & {\small \color{black!80} Comprehensive benchmark tables (Image, Video, AV); 
        extensive \textbf{ablation studies and component-wise analysis}; 
        additional comparisons with more baseline methods; 
        detailed setups for \textbf{Inductive Analysis} (Condition Swap \& Paraphrase construction); 
        and analysis of \textbf{Zero-shot Generalization}.} \\[3em]
        \textbf{\color{black!60} Appendix F} & {\large \bfseries \sffamily Future Directions} \dotfill Page \pageref{app:future} \\[-1em]
        & {\small \color{black!80} Discussion on integrating \textbf{Large Language Models (LLMs)} for fine-grained semantic granularities and extending the AAM paradigm to unify broad \textbf{downstream attention-centric tasks}.} \\

    \end{tabular}
\end{tcolorbox}
\vspace{2em}

\section{Implementation Details}
\label{app:implementation}
The computing infrastructure specifications are detailed in Table \ref{Huawei}. The hyperparameter settings for AAM training are listed in Table \ref{tab:hyperparams_training}.

\begin{table}[h]
    \centering
    \caption{Computing infrastructure for experiments on Huawei Ascend platform.}
    \vspace{-2mm}
    \setlength{\tabcolsep}{16pt}
    \begin{tabular}{ll}
        \toprule
        \textbf{Component} & \textbf{Configuration} \\
        \midrule
        CPU & Huawei Kunpeng-920 (ARM64, 192 cores) \\
        NPU & Huawei Ascend 910B3 (64GB HBM) \\
        RAM & 1.5TB \\
        OS & EulerOS 2.0 (SP10) \\
        \midrule
        NPU Firmware & 23.0.6 \\
        CANN Version & 6.3.2 \\
        Language & Python 3.7 \\
        Framework & MindSpore 2.1.0 \\
        Dependencies & torch 2.7.1, torchvision 0.22.1, numpy 1.26.4\\
        \bottomrule
    \end{tabular}
    \label{Huawei}
    \vspace{-4mm}
\end{table}

\begin{table}[h]
    \centering
    \caption{Hyperparameter settings for AAM training.}
    \vspace{-2mm}
    \label{tab:hyperparams_training}
    \begin{tabular}{ll}
        \toprule
        \textbf{Hyperparameter} & \textbf{Value} \\
        \midrule
        \multicolumn{2}{l}{\textit{Training Dynamics}} \\
        Input Resolution & $448 \times 448$ \\
        Batch Size & 32 \\
        Optimizer & AdamW \\
        Base Learning Rate & $5 \times 10^{-4}$ \\
        Weight Decay & $5 \times 10^{-3}$ \\
        LR Scheduler & Cosine Annealing ($T_{max}=10, \eta_{min}=10^{-5}$) \\
        Precision & BF16 (BFloat16) \\
        \midrule
        \multicolumn{2}{l}{\textit{Model Architecture}} \\
        Visual Backbone & DINOv3 ViT-L (Frozen) \\
        Text Encoder & CLIP ViT-L/14 (Frozen) \\
        Audio Encoder & Wav2CLIP Res-Net18(Frozen) \\
        LoRA Rank ($r$) & 32 \\
        LoRA Alpha ($\alpha$) & 64 \\
        LoRA Dropout & 0.05 \\
        LoRA Adaptation Scope & Last 24 Transformer Blocks \\
        \midrule
        \multicolumn{2}{l}{\textit{Specialized Optimization}} \\
        Router Weight Decay & $0.1 \times$ Base Weight Decay ($5 \times 10^{-4}$) \\
        Gradient Clipping & None \\
        \bottomrule
    \end{tabular}
\end{table}

\subsection{Task Losses and Metrics}
\label{sec:appendix_loss}

Following prior work \cite{wang2017deep, wang2018revisiting, li2025tfs}, we utilize standard metrics for optimization and evaluation. Let $\mathbf{P}, \mathbf{G} \in \mathbb{R}^{H \times W}$ denote the predicted and ground-truth density maps (sum to 1), and $\mathbf{B} \in \{0, 1\}^{H \times W}$ denote the binary fixation map.

\noindent \textbf{Kullback-Leibler Divergence (KLD).} Minimizes the information loss between distributions:
\begin{equation}
    \mathcal{L}_{\text{KLD}}(\mathbf{P}, \mathbf{G}) = \sum_{i} \mathbf{G}_i \log \left( \frac{\mathbf{G}_i}{\mathbf{P}_i + \epsilon} \right).
\end{equation}

\noindent \textbf{Correlation Coefficient (CC).} Measures the linear correlation strength:
\begin{equation}
    \mathcal{L}_{\text{CC}}(\mathbf{P}, \mathbf{G}) = \frac{\text{cov}(\mathbf{P}, \mathbf{G})}{\sigma_{\mathbf{P}} \cdot \sigma_{\mathbf{G}}}.
\end{equation}

\noindent \textbf{Similarity Metric (SIM).} Quantifies the histogram intersection (overlap):
\begin{equation}
    \mathcal{L}_{\text{SIM}}(\mathbf{P}, \mathbf{G}) = \sum_{i} \min(\mathbf{P}_i, \mathbf{G}_i).
\end{equation}

\noindent \textbf{AUC-Judd (AUC-J).} Evaluates saliency as a binary classification task. It calculates the area under the ROC curve (TPR vs. FPR) by varying a threshold $\tau$ on $\mathbf{P}$ to classify fixation locations in $\mathbf{B}$.

\section{Dataset Details and Texts}
\label{app:datasets}  
To foster community research and facilitate the development of downstream tasks, we will publicly release the AAM codebase alongside Attention-1.75M. To support foundation model training, we curated Attention-1.75M, a standardized corpus unifying over 1.75M fixation instances across image, video, and audio-visual modalities, each equipped with dataset-level prompts. Specifically, this corpus encompasses a diverse spectrum of scenarios—including e-commerce, webpages, UIs, natural scenes, photography, portraits, and cinematic content—spanning both free-viewing and task-driven attention paradigms, as illustrated in Table.~\ref{tab:image_saliency}, Table.~\ref{tab:video_saliency}, and Table.~\ref{tab:av_saliency}. The corresponding text conditions were systematically annotated based on the detailed metadata and acquisition protocols of the source datasets, as illustrated in Fig.~\ref{fig_prompt_1}, Fig.~\ref{fig_prompt_2}, and Fig.~\ref{fig_prompt_3}. In addition, we provide summary statistics of Attention-1.75M, along with details on annotation quality control and consistency measures, in Tables~\ref{tab:dataset-analysis} and~\ref{tab:attention-levels}.

\definecolor{tableHeader}{RGB}{230,235,245}
\definecolor{tableRow}{RGB}{245,247,250}
\begin{table}[htbp]
\centering
\footnotesize
\caption{Summary of Image-based Attention Modeling Datasets}
\label{tab:image_saliency}
\vspace{-2mm}
\setlength{\tabcolsep}{11.05pt}
\renewcommand{\arraystretch}{1.3}
\begin{tabular}{l l l c l l}
\toprule
\multicolumn{6}{c}{\textbf{\faImage~~Image Attention}} \\
\midrule
\rowcolor{tableHeader}
\textbf{Dataset} & \textbf{Publication} & \textbf{Domain} & \textbf{Images} & \textbf{Resolution} & \textbf{Task} \\
\midrule
SALICON \cite{huang2015salicon}  & CVPR$_{15}$ & Natural scenes & 15,000 & $640 \times 480$ & Free-view \\
\rowcolor{tableRow}
MIT1003 \cite{judd2009learning} & ICCV$_{09}$ & Natural scenes & 1,003 & Varied & Free-view \\
CAT2000 \cite{borji2015cat2000} & arXiv$_{15}$ & Natural scenes & 2,000 & $1080 \times 1920$ & Free-view \\
\rowcolor{tableRow}
OSIE \cite{xu2014predicting}  & Journal of Vision$_{14}$ & Natural scenes & 700 & $800 \times 600$ & Free-view \\
FIGRIM \cite{Bylinskii2015figrim} & Vision Research$_{15}$ & Natural scenes & 2,787 & $1366 \times 768$ & Free-view \\
\rowcolor{tableRow}
U-EYE \cite{jiang2023ueyes} & ACM CHI$_{23}$ & Web pages & 1,583 & Varied & Task-driven \\
FiWI \cite{shen2014webpage} & ECCV$_{14}$ & Web pages & 149 & $1366 \times 768$ & Task-driven \\
\rowcolor{tableRow}
SalECI \cite{jiang2022does} & CVPR$_{22}$ & E-commerce & 871 & $720 \times 720$ & Task-driven \\
\bottomrule
\end{tabular}
\vspace{-4mm}
\end{table}

\begin{table}[htbp]
\centering
\footnotesize
\caption{Summary of Video-based Attention Modeling Datasets}
\label{tab:video_saliency}
\setlength{\tabcolsep}{3.8pt}
\renewcommand{\arraystretch}{1.3}
\begin{tabular}{l l l c l c c l}
\toprule

\multicolumn{8}{c}{\textbf{\faFilm~~Video Attention}} \\
\midrule

\rowcolor{tableHeader}
\textbf{Dataset} & \textbf{Publication} & \textbf{Domain} & \textbf{Videos} & \textbf{Resolution} & \textbf{Frames} & \textbf{Viewer} & \textbf{Task} \\
\midrule
DHF1K \cite{wang2018revisiting} & CVPR$_{18}$ & Natural scenes & 1,000 & $640 \times 360$ & 582,605 & 17 & Free-view \\
\rowcolor{tableRow}
Hollywood-2 \cite{mathe2014actions} & TPAMI$_{15}$ & Movies & 1,707 & $720 \times 480$ & 487,207 & 19 & Task-driven \\
UCF Sports \cite{mathe2014actions} & TPAMI$_{15}$ & Sports & 150 & $720 \times 480$ & 9,900 & 19 & Task-driven \\
\rowcolor{tableRow}
LEDOV \cite{jiang2018deepvs} & ECCV$_{18}$ & Natural scenes & 538 & $1280 \times 720$ & 179,336 & 32 & Free-view \\
\bottomrule
\end{tabular}
\vspace{-4mm}
\end{table}

\begin{table}[htbp]
\centering
\footnotesize
\vspace{-4mm}
\caption{Summary of Audio-Visual Attention Modeling Datasets}
\label{tab:av_saliency}
\setlength{\tabcolsep}{3.3pt}
\renewcommand{\arraystretch}{1.3}
\begin{tabular}{l l l c l l c l}
\toprule

\multicolumn{7}{c}{\textbf{\faVolumeUp~\faFilm~~Audio-Video Attention}} \\
\midrule
\rowcolor{tableHeader}
\textbf{Dataset} & \textbf{Publication} & \textbf{Domain} & \textbf{Videos} & \textbf{Resolution}  & \textbf{Frames} & \textbf{Viewers} & \textbf{Task} \\
\midrule
DIEM \cite{mital2011clustering} & Cognit. Comput.$_{11}$ & Movies & 84 & $1280 \times 720$ & 240,452 & 50 & Free-view \\
\rowcolor{tableRow}
Coutrot-1 \cite{coutrot2014saliency} & Jour. of Vision$_{12}$ & People & 60 & $1280 \times 720$ & 9,564 & 72 & Task-driven \\
Coutrot-2 \cite{coutrot2015efficient} & IJCV$_{14}$ & Natural scenes & 40 & $1280 \times 720$ & 25,223 & 40 & Task-driven \\
\rowcolor{tableRow}
AVAD \cite{min2016fixation} & TOMM$_{18}$ & Action events & 60 & $1920 \times 1280$  & 17,134 & 16 & Free-view \\
ETMD \cite{koutras2014predicting} & SPIC$_{20}$ & Movies & 30 & $1280 \times 720$ & 109,788 & 25 & Free-view \\
\rowcolor{tableRow}
SumMe \cite{gygli2014creating} & ECCV$_{14}$ & Sports & 25 & $640 \times 360$  & 52,744 & 15 & Task-driven \\
\bottomrule
\end{tabular}
\vspace{-4mm}
\end{table}

\begin{figure}[!t]
    \centering
    \captionsetup{skip=5pt}
    \includegraphics[width=1\linewidth]{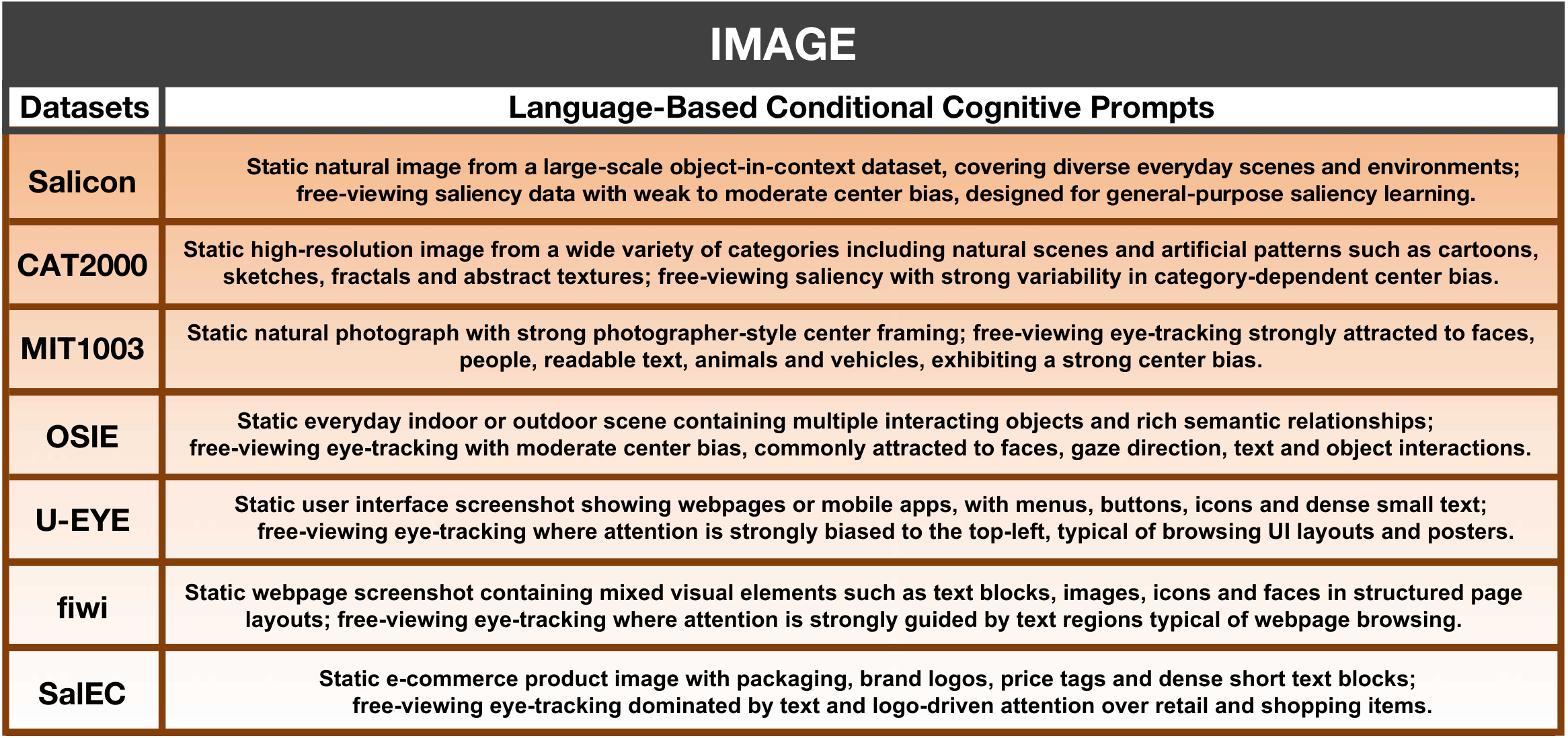}
    \caption{Image attention modeling datasets and their corresponding text-based cognitive conditional prompts}
    \label{fig_prompt_1}
    \vspace{-2mm}
\end{figure}

\begin{figure}[!t]
    \centering
    \captionsetup{skip=5pt}
    \includegraphics[width=1\linewidth]{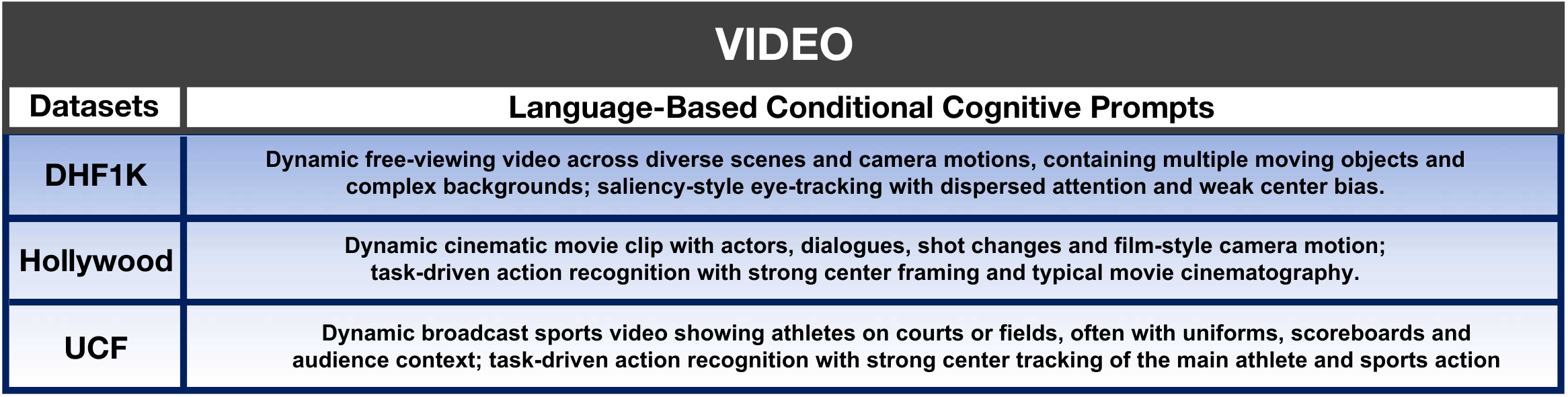}
    \caption{Video attention modeling datasets and their corresponding text-based cognitive conditional prompts}
    \label{fig_prompt_2}
    \vspace{-2mm}
\end{figure}

\begin{figure}[!t]
    \centering
    \captionsetup{skip=5pt}
    \includegraphics[width=1\linewidth]{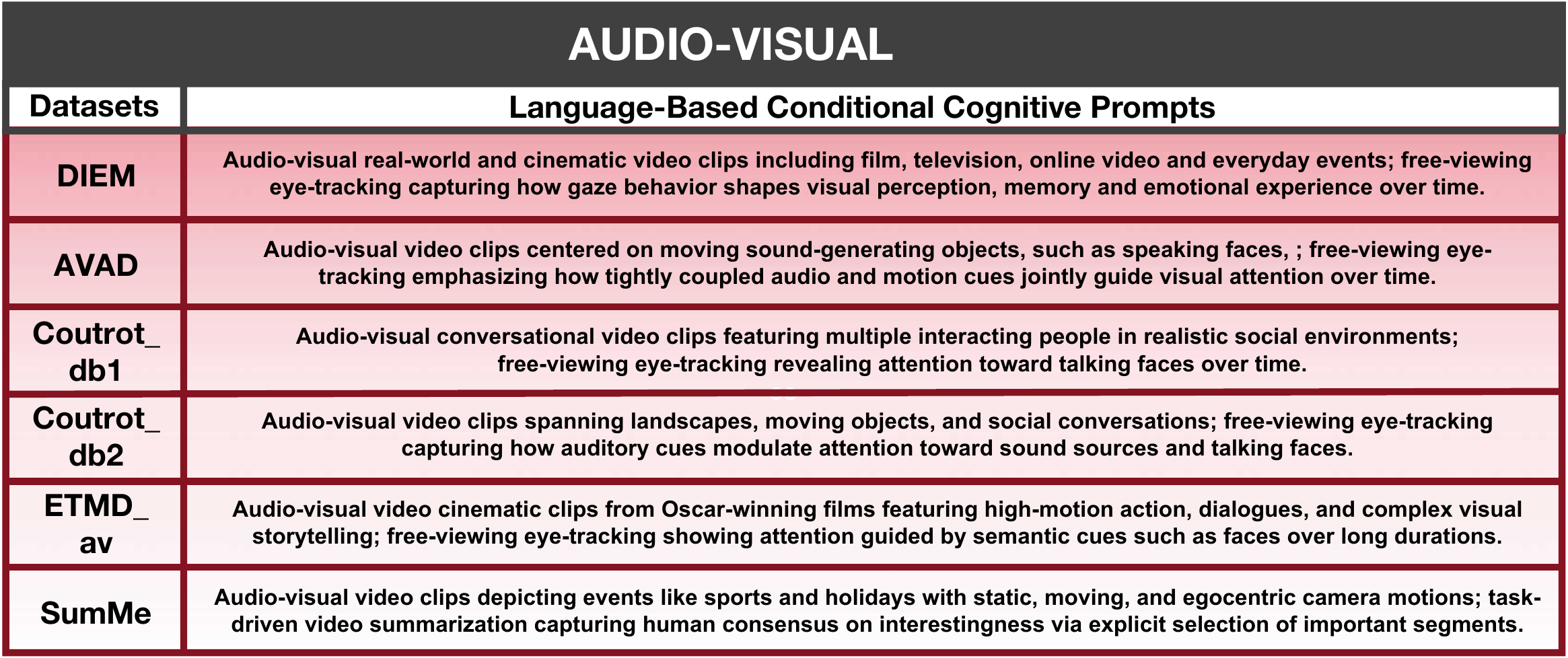}
    \caption{Audio-visual attention modeling datasets and their corresponding text-based cognitive conditional prompts}
    \label{fig_prompt_3}
    \vspace{-2mm}
\end{figure}

\begin{table}[t]
\vspace{-2mm}
\caption{Analysis of Dataset: Summary statistics of Attention-1.75M. More detailed quantitative analysis is presented in Appendix B.}

\definecolor{HeaderBlue}{RGB}{230, 240, 250}  
\definecolor{RowGray}{RGB}{248, 248, 248}    
\definecolor{AlertRed}{RGB}{180, 40, 40}     
\definecolor{KeyBlue}{RGB}{40, 80, 160} 
\label{tab:dataset-analysis}
\begin{center}
\begin{small}
\rowcolors{2}{white}{RowGray}
\begin{tabularx}{\columnwidth}{l X}
\toprule
\rowcolor{HeaderBlue}
\textbf{Aspect} & \textbf{Quantitative Bias / Key Findings} \\
\midrule

Modality & 
\textcolor{KeyBlue}{\textbf{Static images}} dominate (\textcolor{AlertRed}{88.0\%}), while video (11.3\%) and audio-visual (0.7\%) data are scarce. \\
\addlinespace[0.3em]

Image Scene & 
\textcolor{KeyBlue}{\textbf{Natural scenes}} account for \textcolor{AlertRed}{88.3\%} ($>70\%$ landscape-style), while UI/web/e-commerce data contribute 11.7\%. \\
\addlinespace[0.3em]

Video Scene & 
\textcolor{KeyBlue}{\textbf{Professional media}} (49.6\%) and daily videos (43.0\%) dominate, with limited sports (4.1\%) and meetings (3.3\%). \\
\addlinespace[0.3em]

Demographic & 
Aged 18--36 ($\approx 65$--$70\%$ in 18--25; $<1\%$ above 36; \textcolor{AlertRed}{$>\mathbf{50\%}$} annotations rely on mouse-tracking proxies. \\

\bottomrule
\end{tabularx}
\end{small}
\end{center}
\end{table}

\begin{table*}[t] 
\caption{Taxonomy of Attention Levels, corresponding descriptions, and representative datasets. Building upon this hierarchy, we construct prompts using a unified standardized template.}
\label{tab:attention-levels}
\definecolor{HeaderBlue}{RGB}{230, 240, 250}
\definecolor{RowGray}{RGB}{248, 248, 248}
\definecolor{HighRed}{RGB}{180, 40, 40}      
\definecolor{MidOrange}{RGB}{180, 110, 20}   
\definecolor{LowGreen}{RGB}{40, 120, 60}
\begin{center}
\begin{small}
\rowcolors{2}{white}{RowGray}
\begin{tabularx}{\textwidth}{l l >{\raggedright\arraybackslash}X l >{\raggedright\arraybackslash}X}
\toprule
\rowcolor{HeaderBlue}
\textbf{\faSignal\ Level} & 
\textbf{\faEye\ Attention} & 
\textbf{\faInfoCircle\ Description} & 
\textbf{\faPuzzlePiece\ Template Components} & 
\textbf{\faDatabase\ Datasets} \\
\midrule
\textcolor{HighRed}{\textbf{High}} & 
\faArrowAltCircleDown\ Top-down & 
Guided by explicit goals and task objectives. & 
\texttt{[task setting]} + \texttt{[bias]} & 
Hollywood, UCF, SumMe \\
\addlinespace[0.4em]

\textcolor{MidOrange}{\textbf{Mid}} & 
\faProjectDiagram\ Semantic Mod. & 
Influenced by objects, interactions and semantic relationships. & 
\texttt{[key elements]} & 
OSIE, FIWI, UI, SalEC, DIEM, AVAD, Coutrot\_db, ETMD\_av \\
\addlinespace[0.4em]

\textcolor{LowGreen}{\textbf{Low}} & 
\faArrowAltCircleUp\ Bottom-up & 
Driven by visual stimuli such as contrast and scene statistics. & 
\texttt{[stimulus type]} + \texttt{[domain]} & 
MIT1003, CAT2000, SALICON, DHF1K \\
\midrule
\rowcolor{white} 
\multicolumn{5}{p{\dimexpr\textwidth-2\tabcolsep\relax}}{
    \textbf{\faTerminal\ Standardized Prompt Template:} \newline
    \texttt{[stimulus type] + [scene/domain] + [key elements] + [task setting] + [attention bias]}
} \\
\bottomrule
\end{tabularx}
\end{small}
\end{center}
\vskip -0.1in
\end{table*}

\section{Data Provenance and Licensing}
\label{app:data-licensing}

We carefully reviewed the terms of use for all datasets included in Attention-1.75M. To ensure transparency and responsible use, we provide a data provenance and licensing summary in this appendix, documenting the original license, usage restrictions, and redistribution permissions associated with each dataset.

\paragraph{Datasets under CC BY 4.0.}
The following datasets are released under the Creative Commons Attribution 4.0 International license (CC BY 4.0): SALICON, U-EYE, DHF1K, Hollywood-2, and UCF. These datasets allow reuse and redistribution with appropriate attribution, subject to the terms of the license.

\paragraph{Research-only datasets.}
Several datasets are made available for research purposes only, following the terms specified in their original publications or official release pages. These include MIT300, CAT2000, FIGRIM, FiWI, LEDOV, Coutrot\_db, DIEM, ETMD, and SumMe. For these datasets, we follow the original usage restrictions and do not claim any additional redistribution rights.

\paragraph{Datasets under the MIT License.}
OSIE and SalECI are released under the MIT License, which permits reuse, modification, and redistribution under the conditions specified by the license.

\paragraph{Aggregate benchmark policy.}
Attention-1.75M is constructed as an aggregate benchmark from datasets with heterogeneous licensing conditions. To ensure compliance with the most restrictive components, Attention-1.75M will be released under a research-only policy.

\paragraph{Licensing matrix.}
Table~\ref{tab:data-licensing-matrix} summarizes the licensing status and redistribution policy for each dataset.

\begin{table}[t]
\centering
\caption{Data provenance and licensing matrix for the datasets included in Attention-1.75M.}
\label{tab:data-licensing-matrix}
\begin{tabular}{lll}
\toprule
\textbf{Dataset} & \textbf{License / Usage Policy} & \textbf{Our Redistribution Policy} \\
\midrule
SALICON & CC BY 4.0 & Refer to original license terms \\
U-EYE & CC BY 4.0 & Refer to original license terms \\
DHF1K & CC BY 4.0 & Refer to original license terms \\
Hollywood-2 & CC BY 4.0 & Refer to original license terms \\
UCF & CC BY 4.0 & Refer to original license terms \\
MIT300 & Research only & Not redistributed \\
CAT2000 & Research only & Not redistributed \\
FIGRIM & Research only & Not redistributed \\
FiWI & Research only & Not redistributed \\
LEDOV & Research only & Not redistributed \\
Coutrot\_db & Research only & Not redistributed \\
DIEM & Research only & Not redistributed \\
ETMD & Research only & Not redistributed \\
SumMe & Research only & Not redistributed \\
OSIE & MIT License & Refer to original license terms \\
SalECI & MIT License & Refer to original license terms \\
\bottomrule
\end{tabular}
\end{table}

\section{Theoretical Proof and Analysis}
\label{app:theory} 

\subsection{Theoretical Properties of the Discretized FP Dynamics}
The Fokker--Planck equation provides a physically grounded framework for modeling temporal saliency evolution, while enforcing a key structural constraint via \textbf{mass conservation}. This constraint stabilizes training and promotes meaningful, interpretable attention dynamics.

In video saliency prediction, standard metrics (e.g., KL, CC, SIM) implicitly assume attention to be a conserved finite resource. Accordingly, we model the saliency state at time $t$ as a probability density on $\Omega$ and enforce the simplex constraint:
\begin{equation}
    u_t \in \Delta \;=\; \left\{u \succeq 0 \;\middle|\; \sum_{x\in\Omega} u(x)=1\right\}.
\end{equation}
Without this conservation law, the temporal module could trivially reduce loss by globally rescaling saliency magnitude, rather than capturing motion-driven drift and diffusion. Mass conservation removes this degree of freedom, ensuring stable and physically meaningful evolution.

In this appendix, we formally analyze the theoretical properties of the resulting discrete system. Specifically, we show that the Lie--Trotter operator splitting scheme (Sec.~\ref{sec:FPD}) preserves the probabilistic interpretation of attention distributions and ensures numerical stability.

\paragraph{Probability simplex.}
Recall that each saliency state $u_t\in\mathbb{R}^{|\Omega|}$ is defined on the probability simplex $\Delta := \left\{u\succeq 0,\;\mathbf{1}^\top u = 1\right\}$.

\begin{proposition}[Simplex Invariance and Stability]
\label{prop:stability}
Assume periodic or zero-flux boundary conditions along the temporal axis. If the initial state satisfies $u_t\in\Delta$ and the diffusion step size obeys the CFL condition
\begin{equation}
    \Delta t\,\nu_t(x)\le \frac12, \qquad \forall\,t,x,
\end{equation}
then the discretized FP evolution satisfies:
\begin{enumerate}
    \item \textbf{Non-negativity:} all intermediate states remain element-wise non-negative.
    \item \textbf{Mass boundedness:} the $\ell_1$ mass remains bounded and is strictly normalized after projection.
    \item \textbf{Simplex invariance:} the projection operator guarantees $u_t\in\Delta$ at every iteration.
\end{enumerate}
\end{proposition}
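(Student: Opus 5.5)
We follow the splitting order $u_t^{(0)}\to u_t^{(1)}\to u_t^{(2)}\to u_t^{(3)}\to u_t$ and check one invariant per sub-operator. Throughout, we use that $A_x(t\leftarrow\cdot)$ is a softmax over $t'$, hence entrywise positive with $\sum_{t'}A_x(t\leftarrow t')=1$. We also use that $\lambda_t(x)\in[0,1]$ (Sigmoid) and that $u^{\text{obs}}_t\in\Delta$.

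\textbf{Drift.} Assume $\Delta t\in[0,1]$, which the residual Euler form~\eqref{eq:drift_update} requires. Then
\[
u_t^{(1)}(x)=(1-\Delta t)\,u_t^{(0)}(x)+\Delta t\,\tilde u_t(x)
\]
is a convex combination of nonnegative quantities, since $\tilde u_t(x)=\sum_{t'}A_x(t\leftarrow t')u_{t'}(x)\ge 0$. This gives non-negativity. For mass, $\tilde u_t(x)\le \max_{t'}u_{t'}(x)$, so
\[
\|\tilde u_t\|_1\le \sum_x\max_{t'}u_{t'}(x)\le \sum_{t'}\|u_{t'}\|_1 = T,
\]
and hence $\|u_t^{(1)}\|_1\le 1-\Delta t+\Delta t\,T$ is bounded. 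Exact conservation does not hold per frame, because the kernel is row-stochastic in $t$ but not column-stochastic. We only claim boundedness, which is what item 2 asserts.

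\textbf{Diffusion.} This is the CFL step. Read~\eqref{eq:diffusion_update} with the CFL quantity $r_t(x):=\nu_t(x)\Delta t$ as the effective coefficient multiplying the second difference, which is how the stated condition is meant. Then
\[
u_t^{(2)}(x)=(1-2r_t(x))\,u_t^{(1)}(x)+r_t(x)\bigl(u_{t-1}^{(1)}(x)+u_{t+1}^{(1)}(x)\bigr).
\]
Under $r_t(x)\le\frac12$ all three weights are nonnegative, so non-negativity is preserved. With periodic or zero-flux (ghost-frame reflecting) boundary conditions, summing over $t$ telescopes the second differences to zero whenever $r$ is constant in $t$. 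For general $r_t(x)$, the weights give
\[
\textstyle\sum_t u_t^{(2)}(x)\le (1+2\max r)\sum_t u_t^{(1)}(x)\le 2\sum_t u^{(1)}_t(x),
\]
which still yields a bound on the total temporal mass and hence on each frame's $\ell_1$ mass. The main obstacle is reconciling the literal $\Delta t/(\Delta t)^2$ scaling with the stated CFL condition. We resolve it by fixing this convention explicitly, i.e.\ taking unit grid spacing in $\tau$ so that the CFL quantity is $\nu\Delta t$. We also need to handle the boundary frames carefully: periodic wrap, or ghost values $u_0=u_1$, $u_{T+1}=u_T$.

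\textbf{Correction and projection.} Since $\lambda_t(x)\in[0,1]$, $u_t^{(3)}$ is a convex combination of $u_t^{(2)}\ge0$ and $u^{\text{obs}}_t\ge 0$. It is therefore nonnegative, with
\[
\|u_t^{(3)}\|_1\le \|u_t^{(2)}\|_1+1 .
\]
Finally, $\mathcal P_{\text{proj}}$ divides $u_t^{(3)}+\epsilon\succ0$ by its total
\[
\|u_t^{(3)}\|_1+\epsilon|\Omega|>0,
\]
which is strictly positive and finite. The output is therefore strictly positive and sums to exactly $1$, so $u_t\in\Delta$. This completes item 2 (strict normalization after projection) and item 3. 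Induction over iterations, with base case $u_t\in\Delta$, then gives all three properties at every step.
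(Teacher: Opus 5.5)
Your proposal is correct and follows essentially the same route as the paper's proof sketch. The drift step is a convex combination because the softmax kernel is row-stochastic, and the diffusion step becomes $(1-2r)u_t+r(u_{t-1}+u_{t+1})$ with nonnegative weights under the CFL bound; the correction is a convex combination since $\lambda_t(x)\in[0,1]$, and the $\epsilon$-shifted projection enforces exact normalization. You are in fact more careful than the paper, which silently relies on $\Delta t\le 1$ for the drift, on the unit-spacing reading of the $\Delta t/(\Delta t)^2$ factor that makes $\nu_t\Delta t\le\frac12$ the right CFL condition, and on $\nu_t(x)\ge 0$ (which you also use tacitly), and which never gives the explicit $\ell_1$ bounds you supply for item 2.
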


\begin{proof}[Proof Sketch]
\textit{Drift step.} The temporal attention kernel $A_x(t\leftarrow t')$ is row-stochastic, thus defining a valid Markov transition. Consequently, the drift update is a convex combination of non-negative states and preserves non-negativity.

\textit{Diffusion step.} The explicit second-order diffusion update can be rewritten as a convex interpolation of neighboring temporal states. Under the CFL constraint $\Delta t\nu_t\le 1/2$, all coefficients remain non-negative, ensuring stability and preventing oscillatory divergence.

\textit{Correction step.} The correction operator interpolates between the predicted state and the observed distribution, hence preserving non-negativity.

\textit{Projection.} Finally, the simplex projection explicitly enforces $\mathbf{1}^\top u_t=1$, eliminating accumulated numerical drift and ensuring a valid probability distribution for the next iteration.
\end{proof}

\subsection{Supplementary Derivation Details for FPD Parameters}
\label{sec:fpd_details}
The drift equation is given by $\frac{\partial u}{\partial t} = -\nabla_{\tau} \cdot (\mathbf{v} u)$. To simulate this physical process within a discrete feature space, we utilize bidirectional temporal self-attention to parameterize the drift propagator $A_x$. We define the discrete transition kernel $A_x(t \leftarrow t')$ from source time $t'$ to target time $t$ as:
\begin{equation}
    A_x^{(h)}(t \leftarrow t') = \frac{\exp\left( \langle q_t^{(h)}(x), k_{t'}^{(h)}(x) \rangle / (\sqrt{d}\beta) \right)}{\sum_{\tau'=1}^{T} \exp\left( \langle q_t^{(h)}(x), k_{\tau'}^{(h)}(x) \rangle / (\sqrt{d}\beta) \right)}.
\end{equation}
Here, $x \in \Omega$ denotes a fixed spatial location, $t$ represents the target time step for the current update, and $t'$ indicates the source time step providing information. $q_t^{(h)}$ and $k_{t'}^{(h)}$ refer to the query and key projections of the $h$-th attention head, respectively. The summation term in the denominator functions as a normalization factor, where $\tau'$ is a dummy index traversing the entire temporal axis $1, \dots, T$. Finally, $\beta$ serves as a temperature coefficient to modulate the entropy of the transition distribution.

\section{Methodology Details}
\label{app:methodology}

\begin{figure}[!t]
    \centering
    \captionsetup{skip=5pt}
    \includegraphics[width=0.7\linewidth]{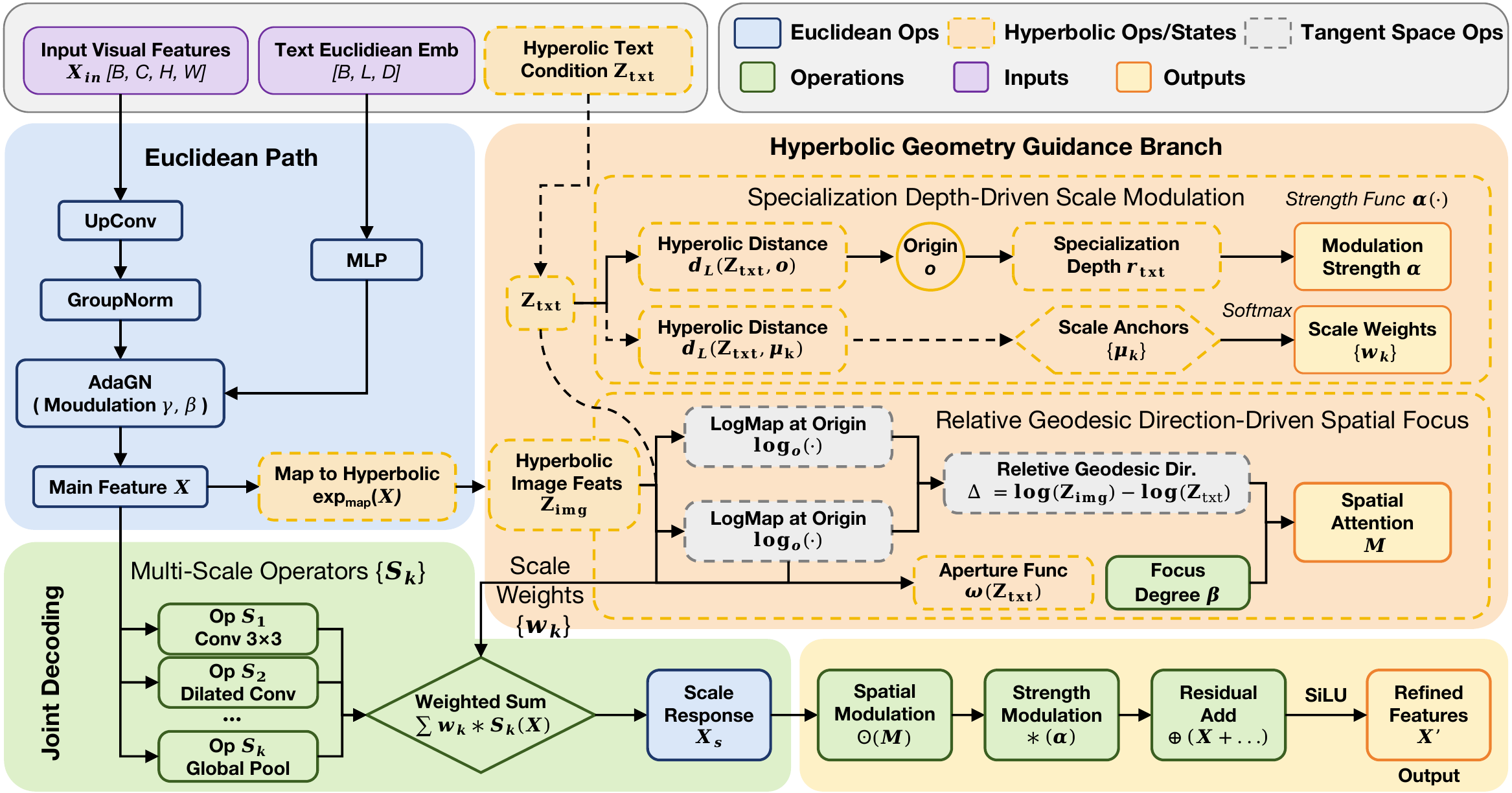}
    \caption{The architecture of audio fusion module}
    \label{fig_decoder}
    \vspace{-2mm}
\end{figure}

\subsection{Hyperbolic Decoder}
As shown in Fig.~\ref{fig_decoder} and Algorithm \ref{alg:decoder_flow}, the decoder modulates the backbone feature $X\in\mathbb R^{B\times C\times H\times W}$ using the text tangent vector $t$ and hyperbolic condition embedding $z_{\text{cond}}$.
First, we compute the condition depth $r_{\text{cond}} = d_{\mathbb L}(z_{\text{cond}},0)$ to control the modulation strength. This scalar guides the \textbf{pixel-level hyperbolic gating}, where we apply a log--gate--exp transformation: $\tilde X = \mathrm{HypPixGate}(X, r_{\text{cond}})$.
Next, for \textbf{multi-scale extraction}, we obtain structural responses $\{Y_k\}_{k=1}^K = \{\mathcal O_k(\tilde X)\}$ using local, dilated, and global operators. These scales are fused via an anchor-based module:
\begin{equation}
    y = \mathrm{HypScaleFuse}(\{Y_k\}, t, z_{\text{cond}}),
\end{equation}
where weights depend on text features and hyperbolic distances.
Finally, the fused term $y$ is injected into the \textbf{upsampling blocks} as $\hat X = \mathrm{UpBlock}(X, t, y, r_{\text{cond}})$, and the final saliency map is generated by a \textbf{joint Gaussian bias head}:
\begin{equation}
    S = \mathrm{GaussianBiasHead}(\hat X, t, z_{\text{cond}}, r_{\text{cond}}).
\end{equation}

\begin{algorithm}[tb]
\caption{Hyperbolic Multi-Scale Decoder (Data Flow)}
\begin{algorithmic}

\STATE {\bfseries Input:} backbone feature map $X$
\STATE {\bfseries Input:} text feature $t$
\STATE {\bfseries Input:} hyperbolic condition point $z_{\text{cond}}$
\STATE {\bfseries Output:} saliency map $S$

\vspace{0.5em}
\STATE {\bfseries Step 1: Condition depth}
\STATE $r_{\text{cond}} \leftarrow d_{\mathbb L}(z_{\text{cond}},0)$

\vspace{0.5em}
\STATE {\bfseries Step 2: Pixel-level hyperbolic modulation}
\STATE $\tilde X \leftarrow \mathrm{HypPixGate}(X, r_{\text{cond}})$
\COMMENT{log--gate--exp conditional gating}

\vspace{0.5em}
\STATE {\bfseries Step 3: Multi-scale structural extraction}
\STATE $\{Y_k\}_{k=1}^K \leftarrow \mathrm{MultiScaleOps}(\tilde X)$
\COMMENT{local / dilated / global operators}

\vspace{0.5em}
\STATE {\bfseries Step 4: Hyperbolic scale fusion}
\STATE $y \leftarrow \mathrm{HypScaleFuse}(\{Y_k\}, t, z_{\text{cond}})$
\COMMENT{anchor-distance scale gating}

\vspace{0.5em}
\STATE {\bfseries Step 5: Upsampling with depth-controlled injection}
\STATE $\hat X \leftarrow \mathrm{UpBlock}(X, t, y, r_{\text{cond}})$
\COMMENT{Ada-style modulation + residual injection}

\vspace{0.5em}
\STATE {\bfseries Step 6: Joint Gaussian bias prediction}
\STATE $S \leftarrow \mathrm{GaussianBiasHead}(\hat X, t, z_{\text{cond}}, r_{\text{cond}})$

\vspace{0.3em}
\STATE {\bfseries Return:} $S$
\label{alg:decoder_flow}
\end{algorithmic}
\end{algorithm}

\begin{algorithm}[tb]
\caption{Robust Audio-Visual Fusion Branch with Stabilized Modulation}
\begin{algorithmic}

\STATE {\bfseries Input:} audio waveform $\mathcal A\in\mathbb R^{T\times L}$
\STATE {\bfseries Input:} visual features $\mathbf F_v\in\mathbb R^{T\times HW\times C_v}$
\STATE {\bfseries Output:} fused visual features $\mathbf F_{\text{out}}$

\vspace{0.4em}
\STATE {\bfseries Step 1: Audio representation (Wav2CLIP)}

\STATE $\mathbf F_a \leftarrow \mathrm{Wav2CLIP}(\mathcal A)$
\COMMENT{$\mathbf F_a\in\mathbb R^{T\times C_a}$, aligned with CLIP space}

\vspace{0.4em}
\STATE {\bfseries Step 2: Spatial alignment via cross-attention}

\STATE $\hat{\mathbf F}_a \leftarrow 
\mathrm{MHCA}(\mathbf Q=\mathbf F_v,\; \mathbf K=\mathbf F_a,\; \mathbf V=\mathbf F_a)$
\COMMENT{audio context aligned to visual spatial locations}

\vspace{0.4em}
\STATE {\bfseries Step 3: Modality gating (audio-visual relevance)}

\STATE $\alpha \leftarrow \sigma(\mathrm{MLP}_{\text{gate}}([\mathrm{Pool}(\mathbf F_v)\|\mathrm{Pool}(\mathbf F_a)]))$
\COMMENT{$\alpha\in[0,1]$, initialized with visual-prior bias}

\vspace{0.4em}
\STATE {\bfseries Step 4: Stabilized Residual Tanh-FiLM fusion}

\STATE $(\gamma,\beta) \leftarrow \mathrm{Linear}(\hat{\mathbf F}_a)$
\COMMENT{zero-initialized projection}

\STATE $\mathbf F_{\text{out}} \leftarrow 
\mathbf F_v \odot \big(1+\alpha\cdot\tanh(\gamma)\big)
+ \alpha\cdot\beta$
\COMMENT{bounded modulation prevents gradient explosion}

\vspace{0.4em}
\STATE {\bfseries Step 5: Negative Sample Attack (training only)}

\IF{with probability $p=0.3$}
    \STATE Replace $\mathcal A$ with Gaussian noise
\ENDIF

\STATE Add sparsity regularization loss: $\mathcal L_{\text{gate}}=\|\alpha\|_2$

\STATE {\bfseries Return:} $\mathbf F_{\text{out}}$
\label{alg:av_branch}
\end{algorithmic}
\end{algorithm}

\begin{figure}[!t]
    \centering
    \captionsetup{skip=5pt}
    \includegraphics[width=0.7\linewidth]{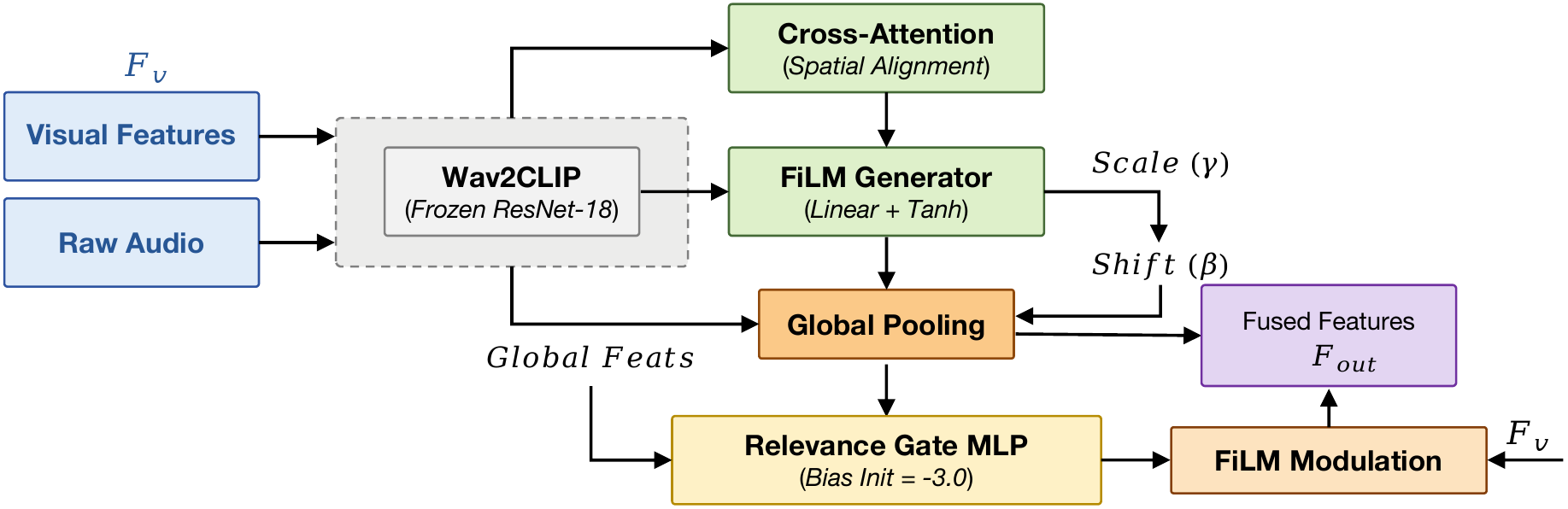}
    \caption{The architecture of audio-visual fusion branch}
    \label{fig_audio}
    \vspace{-2mm}
\end{figure}

\subsection{Audio-Visual Branch with Robust Feature Modulation}
\label{subsec:av_branch}

To effectively exploit auditory semantics while mitigating the interference from irrelevant background noise (i.e., the audio-visual mismatch problem), we introduce a lightweight yet robust audio-visual fusion branch, as illustrated in Fig. \ref{fig_audio} and Algorithm \ref{alg:av_branch}.

\noindent \textbf{Audio Representation.}
To ensure semantic alignment with the visual modality under a limited computational budget, we avoid using computationally expensive Transformer-based audio encoders. Instead, we adopt \textbf{Wav2CLIP}~\cite{wu2022wav2clip} as our audio feature extractor. Wav2CLIP employs a ResNet-18 backbone distilled from the CLIP image encoder, projecting the raw audio waveform $\mathcal{A}\in\mathbb{R}^{T\times L}$ into a semantic embedding sequence $\mathbf{F}_a\in\mathbb{R}^{T\times C_a}$, where $C_a=512$. This design naturally places the extracted audio features in a shared latent space with the visual representations, facilitating cross-modal interaction.

\noindent \textbf{Spatial Alignment via Cross-Attention.}
Given the visual feature map from the backbone network, denoted as $\mathbf{F}_v\in\mathbb{R}^{T\times HW\times C_v}$, we first establish spatial correspondence across modalities. Specifically, we employ multi-head cross-attention (MHCA), where the flattened visual features $\mathbf{F}_v$ serve as the \textit{Query}, while the audio embeddings $\mathbf{F}_a$ act as both the \textit{Key} and \textit{Value}. This operation produces a spatially aligned auditory representation $\hat{\mathbf{F}}_a$, which guides auditory context toward visually relevant regions.

\noindent \textbf{Stabilized FiLM Fusion.}
Na\"ive additive fusion often leads to the phenomenon of \emph{modality laziness}, where one modality dominates while the other is ignored. Instead, we adopt \textbf{Feature-wise Linear Modulation (FiLM)} to conditionally modulate the visual statistics using audio cues. To improve training stability, we propose a \emph{Residual Tanh-FiLM} mechanism. Concretely, we generate the scaling and shifting parameters $(\gamma,\beta)$ from $\hat{\mathbf{F}}_a$ through a zero-initialized linear projection layer. The modulation is formulated as:
\begin{equation}
    \mathbf{F}_{out} = \mathbf{F}_v \odot \left( 1 + \alpha \cdot \tanh(\gamma) \right) + \left( \alpha \cdot \beta \right),
\label{eq:film_fusion}
\end{equation}
where $\odot$ denotes element-wise multiplication. The $\tanh(\cdot)$ function bounds the scaling factor within a stable range, effectively preventing gradient explosion during early training.

\noindent \textbf{Modality Gating and Negative Sample Attack.}
To explicitly handle audio-visual mismatch, we introduce a learnable relevance gate $\alpha\in[0,1]$, computed by a lightweight MLP that takes global audio-visual embeddings as input. We initialize the gate bias to $-3.0$ (yielding an initial $\alpha\approx0.05$), thereby imposing a \emph{visual-prior} at the beginning of training. Furthermore, we adopt a \textbf{Negative Sample Attack (NSA)} strategy: with probability $p=0.3$, the audio input is replaced by Gaussian noise, and a sparsity regularization term $\|\alpha\|_2$ is applied to encourage suppression of unreliable auditory signals. This adversarial setup forces the network to explicitly recognize and reject irrelevant or noisy audio cues.

\section{Additional Results}
\label{app:results}

Synthesizing the ablation studies discussed in the main text, we draw the following conclusions:
\begin{enumerate}
    \item Given visual input, the attention distribution is not unique but systematically modulated by viewing conditions and cognitive contexts.
    \item Language prompts provide an effective and interpretable mechanism for characterizing this modulation.
    \item Through hierarchical and unified modeling, shared structures across different modalities and attention domains are effectively captured.
    \item Collectively, these designs significantly enhance the model's generalization capabilities across datasets and zero-shot scenarios.
\end{enumerate}
In this appendix, we provide more detailed quantitative results and analyses.

\subsection{Detailed Ablation Studies}
\textbf{Joint Training Efficacy.} Tables \ref{tab:image_join} and \ref{tab:video_join} present detailed results for image and audio-visual joint training, respectively. The results on image datasets demonstrate that, under our hierarchical modeling framework, joint multi-source attention training further boosts performance. Conversely, unconditional modeling leads to a significant performance degradation.

\textbf{Backbone and Modality Analysis.} Table \ref{tab:ablation_foundation} provides a quantitative comparison of different visual backbones, while Table \ref{tab:ablation_audio} details the ablation results for the audio component, confirming that the incorporation of audio cues effectively aids attention prediction in multimodal scenarios.

\subsection{Inductive Analysis}
To verify whether AAM effectively captures the hierarchical behaviors of cognitive modulation, we analyze the mechanism from three complementary perspectives and validate the model's generalization ability (see Table \ref{tab:prompt_all} for comprehensive results): \ding{182} Condition Swap, \ding{183} Paraphrase Invariance, \ding{184} Prompt Granularity.

\paragraph{\ding{182} Condition Swap: Attention Is Not Input-Only.}
For each dataset, we compare three condition settings: i) \textbf{Correct condition} (prompts aligned with the dataset protocol); ii) \textbf{No condition} (generic free-viewing prompts); and iii) \textbf{Wrong condition} (mismatched tasks).

\paragraph{\ding{183} Paraphrase Invariance: Conditioning on Semantics.}
To verify that the linguistic conditioning operates at a semantic level rather than relying on fixed lexical cues, we generated multiple paraphrases for each prompt. These paraphrases vary in wording, length, and syntactic structure but preserve the underlying semantic meaning (see Fig.~\ref{fig_phase}).

\paragraph{\ding{184} Prompt Granularity: Hierarchical Cognitive Conditioning.}
We further investigate whether attention modulation exhibits a hierarchical structure by progressively refining prompts from general to highly specific. For each dataset, we constructed a four-level prompt hierarchy (see Fig.~\ref{fig_g}):
\begin{itemize} 
    \item \textbf{G1 (General):} Free-viewing without task or context specification.
    \item \textbf{G2 (Modality):} Specifying image, video, or audio-visual input.
    \item \textbf{G3 (Viewing Intention):} Coarse alignment with the dataset's viewing protocol.
    \item \textbf{G4 (Detailed Context):} Full scene description and dominant attention drivers.
\end{itemize}
Across extensive benchmarks, we observe a consistent trend: semantics- and layout-driven datasets (e.g., Salicon, OSIE, U-EYE, MIT1003, SalEC) show significant performance gains as prompt granularity increases. Dynamic and audio-visual datasets (e.g., DHF1K, DIEM) exhibit modest but stable gains, indicating that linguistic modulation is non-negligible. Task-driven datasets (e.g., UCF, SumMe) display dataset-specific optimal granularities, reflecting the dominance of task structure in guiding attention. Performance improves when prompt granularity matches the true viewing protocol and cognitive context, whereas over-specification or mis-specification yields diminishing returns. This behavior aligns with findings in cognitive neuroscience, suggesting that attention is jointly shaped by general viewing priors and specific task goals.

\subsection{Zero-shot Generalization}
As shown in Table \ref{tab:zero_shot_generalization}, we evaluate the zero-shot generalization of AAM on AVAD (unseen during training) and LEDOV (trained without text inputs). On AVAD, AAM matches the performance of state-of-the-art task-specific models under both text-conditioned and unconditional settings. On the diverse and large-scale LEDOV benchmark, AAM achieves significant performance gains without text input (see Table \ref{tab:ledov_benchmark}), demonstrating robust real-world applicability and generalization ability.

\subsubsection{Visualization of hyperbolic space characteristics}
\noindent \textbf{Fig.~\ref{fig_ab_1} visualizes the learned hyperbolic embeddings, revealing a distinct ``general $\to$ context $\to$ instance'' radial hierarchy.}
Driven by the hierarchical ordering constraints, the unconditional anchors stably cluster near the origin, while conditional text embeddings ($\blacktriangle$) and specific instance embeddings ($\bullet/\blacksquare/\blacklozenge$) are pushed progressively toward the boundary. This radial stratification utilizes the exponential volume growth of hyperbolic space to prevent collapse and accommodate the vast diversity of specific samples.
\textbf{Crucially, the angular distribution exhibits an ``intertwined yet separable'' structure governed by multi-factor interactions:} while modalities (Image/Video/Audio-Visual) form high-level directional branches, distinct viewing intentions (free-viewing vs. task-driven) induce significant intra-modal separation, and shared attention cues (e.g., faces or motion) create local cross-modal proximity without merging.
\textbf{This structure is quantitatively corroborated by the norm distribution analysis (Fig.~\ref{fig_ab_1}),} which confirms that our hyperbolic entailment cones successfully encode attention modeling as a coherent geometric process capturing radial specialization, angular semantic containment, and hierarchical differentiation.

\begin{figure}[!t]
    \centering
    \captionsetup{skip=5pt}
    \includegraphics[width=1\linewidth]{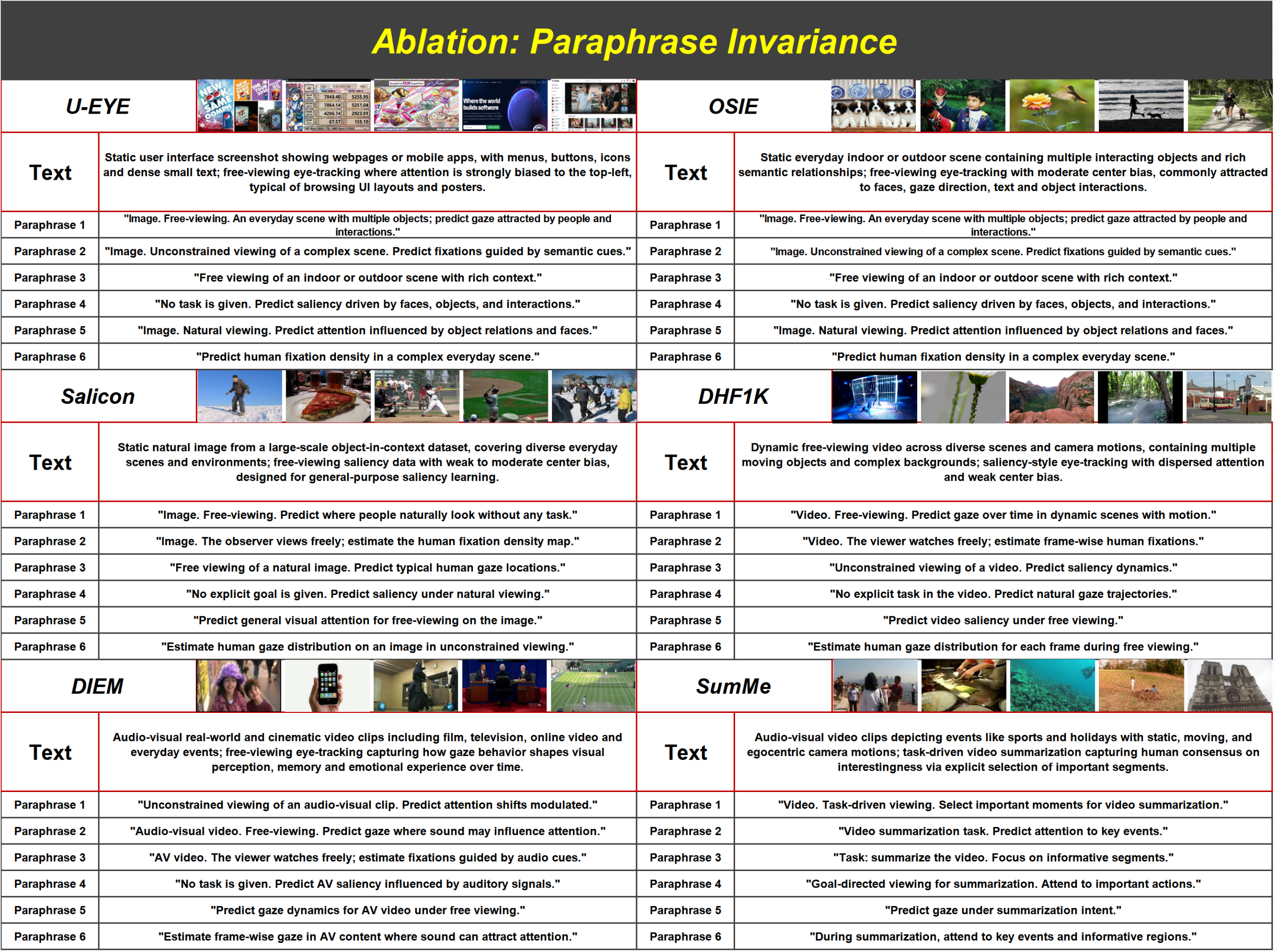}
    \caption{Inductive analysis of conditional attention modulation: (b) Paraphrase invariance.}
    \label{fig_phase}
    \vspace{-3mm}
\end{figure}

\begin{figure}[!t]
    \centering
    \captionsetup{skip=5pt}
    \includegraphics[width=1\linewidth]{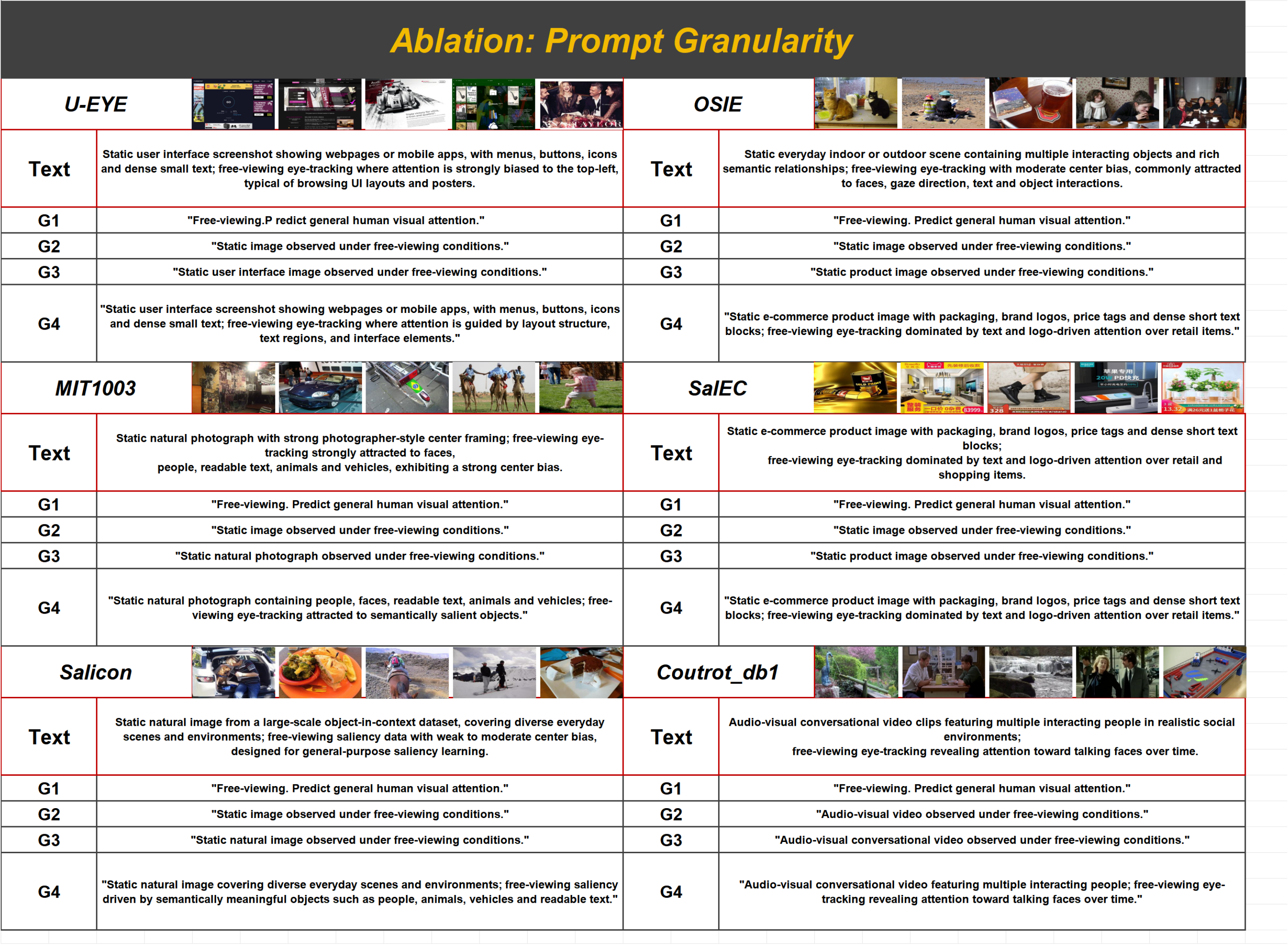}
    \caption{Inductive analysis of conditional attention modulation: (c) Prompt granularity.}
    \label{fig_g}
    \vspace{-3mm}
\end{figure}

\begin{table*}[h]
  \centering
  \footnotesize
  \caption{Quantitative comparison of different training strategies on six image saliency datasets. Results demonstrate the effectiveness of joint training and the proposed hyperbolic components. Best results are highlighted in \textbf{bold}.}
  \label{tab:image_join}
  \vspace{-2mm}

  \renewcommand{\arraystretch}{1.25}

  \setlength{\tabcolsep}{2.5pt}
  \resizebox{\textwidth}{!}{
  \begin{tabular}{l|ccc|ccc|ccc|ccc|ccc|ccc}
    \toprule
 
    \multicolumn{19}{c}{\textbf{\faImage~~Image Joint training}} \\
    \midrule
    
    \rowcolor{tableHeader}
     & 
    \multicolumn{3}{c|}{\textbf{Salicon}} & 
    \multicolumn{3}{c|}{\textbf{OSIE}} & 
    \multicolumn{3}{c|}{\textbf{U-EYE}} & 
    \multicolumn{3}{c|}{\textbf{MIT1003}} & 
    \multicolumn{3}{c|}{\textbf{SalECI}} & 
    \multicolumn{3}{c}{\textbf{CAT2000}} \\

    \rowcolor{tableHeader}
    \multirow{-2}{*}{\textbf{Method}} & 
    \textbf{CC}$\uparrow$ & \textbf{SIM}$\uparrow$ & \textbf{KLD}$\downarrow$ & 
    \textbf{CC}$\uparrow$ & \textbf{SIM}$\uparrow$ & \textbf{KLD}$\downarrow$ & 
    \textbf{CC}$\uparrow$ & \textbf{SIM}$\uparrow$ & \textbf{KLD}$\downarrow$ & 
    \textbf{CC}$\uparrow$ & \textbf{SIM}$\uparrow$ & \textbf{KLD}$\downarrow$ & 
    \textbf{CC}$\uparrow$ & \textbf{SIM}$\uparrow$ & \textbf{KLD}$\downarrow$ & 
    \textbf{CC}$\uparrow$ & \textbf{SIM}$\uparrow$ & \textbf{KLD}$\downarrow$ \\
    \midrule

    Single Training & 
    0.911 & 0.806 & 0.175 & 
    0.872 & 0.727 & 0.312 & 
    0.733 & 0.630 & 0.539 & 
    0.781 & 0.622 & 0.537 & 
    0.788 & 0.673 & 0.469 & 
    0.879 & 0.633 & 0.270 \\

    \rowcolor{tableRow}
    Cross-Dataset Test & 
    0.786 & 0.625 & 0.485 & 
    0.743 & 0.596 & 0.567 & 
    0.365 & 0.424 & 1.222 & 
    0.696 & 0.471 & 0.901 & 
    0.611 & 0.461 & 0.950 & 
    0.723 & 0.585 & 0.617 \\

    Joint Image (w/o Hyp) & 
    0.907 & 0.796 & 0.191 & 
    0.870 & 0.724 & 0.331 & 
    0.639 & 0.578 & 0.709 & 
    0.777 & 0.604 & 0.571 & 
    0.763 & 0.663 & 0.521 & 
    0.820 & 0.708 & 0.361 \\

    \rowcolor{tableRow}
    Joint Image (w/ Hyp) & 
    0.917 & 0.808 & 0.179 & 
    0.881 & 0.740 & 0.288 & 
    0.741 & 0.633 & 0.531 & 
    0.794 & 0.643 & 0.509 & 
    0.790 & 0.673 & 0.462 & 
    0.887 & 0.679 & 0.255 \\

    Joint Image (w/ Hyp + Dec) & 
    0.924 & 0.816 & 0.166 & 
    0.900 & 0.760 & 0.249 & 
    \textbf{0.746} & \textbf{0.635} & 0.527 & 
    0.824 & \textbf{0.674} & 0.469 & 
    0.796 & 0.677 & 0.459 & 
    \textbf{0.909} & \textbf{0.771} & \textbf{0.238} \\

    \rowcolor{tableRow}
    Image+Video (w/o Hyp) & 
    0.901 & 0.789 & 0.201 & 
    0.866 & 0.724 & 0.337 & 
    0.607 & 0.522 & 0.852 & 
    0.779 & 0.611 & 0.572 & 
    0.674 & 0.455 & 0.887 & 
    0.806 & 0.698 & 0.363 \\

    Image+Video (w/ Hyp) & 
    \textbf{0.925} & \textbf{0.819} & \textbf{0.163} & 
    \textbf{0.901} & \textbf{0.762} & \textbf{0.245} & 
    0.745 & \textbf{0.635} & \textbf{0.523} & 
    \textbf{0.830} & \textbf{0.674} & \textbf{0.445} & 
    \textbf{0.796} & \textbf{0.678} & \textbf{0.452} & 
    0.907 & 0.769 & 0.241 \\
    \bottomrule
  \end{tabular}}
\end{table*}

\begin{table*}[h]
  \centering
  \footnotesize
  \caption{Ablation study of audio-video joint training across five audio-visual saliency datasets. 
We progressively incorporate additional video and image data into the training process. 
Results show that multi-source joint training consistently improves performance. 
Best results are highlighted in \textbf{bold}.}
  \label{tab:video_join}
  \vspace{-2mm}
  
  \renewcommand{\arraystretch}{1.3}
  \setlength{\tabcolsep}{3.5pt}
  \resizebox{\textwidth}{!}{
  \begin{tabular}{l|cccc|cccc|cccc|cccc|cccc}
    \toprule
    \multicolumn{21}{c}{\textbf{\faVolumeUp~\faFilm~~Audio-Video Joint training}} \\
    \midrule

    \rowcolor{tableHeader}
    \multicolumn{4}{c|}{\textbf{DIEM}} & 
    \multicolumn{4}{c|}{\textbf{Coutrot\_db1}} & 
    \multicolumn{4}{c|}{\textbf{Coutrot\_db2}} & 
    \multicolumn{4}{c|}{\textbf{ETMD\_av}} & 
    \multicolumn{4}{c}{\textbf{SumMe}} \\
    
    \rowcolor{tableHeader}
    \multirow{-2}{*}{\textbf{Method}} & 
    \textbf{AUC}$\uparrow$ & \textbf{CC}$\uparrow$ & \textbf{SIM}$\uparrow$ & \textbf{NSS}$\uparrow$ &
    \textbf{AUC}$\uparrow$ & \textbf{CC}$\uparrow$ & \textbf{SIM}$\uparrow$ & \textbf{NSS}$\uparrow$ &
    \textbf{AUC}$\uparrow$ & \textbf{CC}$\uparrow$ & \textbf{SIM}$\uparrow$ & \textbf{NSS}$\uparrow$ &
    \textbf{AUC}$\uparrow$ & \textbf{CC}$\uparrow$ & \textbf{SIM}$\uparrow$ & \textbf{NSS}$\uparrow$ &
    \textbf{AUC}$\uparrow$ & \textbf{CC}$\uparrow$ & \textbf{SIM}$\uparrow$ & \textbf{NSS}$\uparrow$ \\
    \midrule
    
    AV & 
    0.910 & 0.662 & 0.535 & 2.69 & 
    0.902 & 0.589 & 0.464 & 2.95 & 
    0.970 & 0.886 & 0.698 & 7.360 & 
    0.939 & 0.531 & 0.482 & 3.485 & 
    0.910 & 0.521 & 0.393 & 2.744 \\
    
    AV + Video & 
    0.916 & 0.684 & 0.489 & 2.51 & 
    0.906 & 0.625 & 0.489 & 3.15 & 
    0.971 & 0.888 & 0.706 & 7.410 & 
    0.941 & 0.546 & 0.500 & 3.596 & 
    0.915 & 0.535 & 0.406 & 2.832 \\

    \rowcolor{tableRow}
    AV + Video + Image& 
    \textbf{0.919} & \textbf{0.710} & \textbf{0.572} & \textbf{2.88} & 
    \textbf{0.911} & \textbf{0.626} & \textbf{0.496} & \textbf{3.22} & 
    \textbf{0.971} & \textbf{0.887} & \textbf{0.697} & \textbf{7.46} & 
    \textbf{0.945} & \textbf{0.550} & \textbf{0.504} & \textbf{3.66} & 
    \textbf{0.920} & \textbf{0.550} & \textbf{0.420} & \textbf{2.90} \\
    
    \bottomrule
  \end{tabular}}
\end{table*}

\begin{table*}[h]
  \centering
  \footnotesize
  \caption{Ablation study of different backbones. The best results are shown in \textbf{bold}.}
  \label{tab:ablation_foundation}
  \vspace{-2mm}
  
  \resizebox{\textwidth}{!}{
  \renewcommand{\arraystretch}{1.25}

  \setlength{\tabcolsep}{2.5pt}
  
  \begin{tabular}{l|ccc|ccc|ccc|ccc|ccc|ccc}
    \toprule

    \multicolumn{19}{c}{\textbf{\faShapes~~Ablation Study of Foundation Models}} \\
    \midrule

    \rowcolor{tableHeader}
     & 
    \multicolumn{3}{c|}{\textbf{Salicon}} & 
    \multicolumn{3}{c|}{\textbf{OSIE}} & 
    \multicolumn{3}{c|}{\textbf{U-EYE}} & 
    \multicolumn{3}{c|}{\textbf{MIT1003}} & 
    \multicolumn{3}{c|}{\textbf{SalECI}} & 
    \multicolumn{3}{c}{\textbf{CAT2000}} \\

    \rowcolor{tableHeader}
    \multirow{-2}{*}{\textbf{Model}} & 
    \textbf{CC}$\uparrow$ & \textbf{SIM}$\uparrow$ & \textbf{KLD}$\downarrow$ & 
    \textbf{CC}$\uparrow$ & \textbf{SIM}$\uparrow$ & \textbf{KLD}$\downarrow$ & 
    \textbf{CC}$\uparrow$ & \textbf{SIM}$\uparrow$ & \textbf{KLD}$\downarrow$ & 
    \textbf{CC}$\uparrow$ & \textbf{SIM}$\uparrow$ & \textbf{KLD}$\downarrow$ & 
    \textbf{CC}$\uparrow$ & \textbf{SIM}$\uparrow$ & \textbf{KLD}$\downarrow$ & 
    \textbf{CC}$\uparrow$ & \textbf{SIM}$\uparrow$ & \textbf{KLD}$\downarrow$ \\
    \midrule

    DINOv3\_small & 
    0.918 & 0.811 & 0.177 & 
    0.857 & 0.727 & 0.317 & 
    0.726 & 0.624 & 0.550 & 
    0.800 & 0.650 & 0.508 & 
    \textbf{0.800} & \textbf{0.686} & 0.447 & 
    0.894 & 0.760 & 0.256 \\

    \rowcolor{tableRow}
    DINOv3\_base & 
    0.920 & 0.810 & 0.172 & 
    0.869 & 0.734 & 0.303 & 
    0.729 & 0.624 & 0.550 & 
    0.811 & 0.656 & 0.477 & 
    0.795 & 0.678 & 0.443 & 
    0.897 & 0.762 & 0.247 \\

    DINOv3\_large & 
    \textbf{0.924} & \textbf{0.816} & \textbf{0.166} & 
    \textbf{0.900} & \textbf{0.760} & \textbf{0.249} & 
    \textbf{0.746} & \textbf{0.635} & \textbf{0.527} & 
    \textbf{0.824} & \textbf{0.674} & \textbf{0.469} & 
    0.796 & 0.677 & 0.459 & 
    \textbf{0.909} & \textbf{0.771} & \textbf{0.238} \\

    \rowcolor{tableRow}
    SAM2\_base & 
    0.871 & 0.763 & 0.243 & 
    0.811 & 0.722 & 0.289 & 
    0.719 & 0.613 & 0.566 & 
    0.790 & 0.641 & 0.500 & 
    0.729 & 0.613 & \textbf{0.397} & 
    0.870 & 0.726 & 0.245 \\

    SAM2\_large & 
    0.884 & 0.788 & 0.211 & 
    0.824 & 0.739 & 0.266 & 
    0.731 & 0.624 & 0.545 & 
    0.798 & 0.644 & 0.515 & 
    0.742 & 0.625 & 0.404 & 
    0.881 & 0.739 & 0.266 \\
    
    \bottomrule
  \end{tabular}}
\end{table*}

\begin{table*}[h]
  \centering
  \footnotesize
  \caption{Ablation study of the audio modality.}
  \label{tab:ablation_audio}
  \vspace{-2mm}

  \renewcommand{\arraystretch}{1}

  \setlength{\tabcolsep}{3.5pt}
  
  \begin{tabular}{l|ccc|ccc|ccc|ccc|ccc}
    \toprule

    \multicolumn{16}{c}{\textbf{\faVolumeUp~~Impact of Audio Modality}} \\
    \midrule

    \rowcolor{tableHeader}
     & 
    \multicolumn{3}{c|}{\textbf{DIEM}} & 
    \multicolumn{3}{c|}{\textbf{Coutrot\_db1}} & 
    \multicolumn{3}{c|}{\textbf{Coutrot\_db2}} & 
    \multicolumn{3}{c|}{\textbf{ETMD\_av}} & 
    \multicolumn{3}{c}{\textbf{SumMe}} \\
    
\rowcolor{tableHeader}
\multirow{-2}{*}{\textbf{Setting}} & 
    \textbf{CC}$\uparrow$ & \textbf{SIM}$\uparrow$ & \textbf{NSS}$\uparrow$ & 
    \textbf{CC}$\uparrow$ & \textbf{SIM}$\uparrow$ & \textbf{NSS}$\uparrow$ & 
    \textbf{CC}$\uparrow$ & \textbf{SIM}$\uparrow$ & \textbf{NSS}$\uparrow$ & 
    \textbf{CC}$\uparrow$ & \textbf{SIM}$\uparrow$ & \textbf{NSS}$\uparrow$ & 
    \textbf{CC}$\uparrow$ & \textbf{SIM}$\uparrow$ & \textbf{NSS}$\uparrow$ \\
\midrule

    w/o Audio & 
    0.703 & 0.571 & \textbf{2.880} & 
    0.615 & 0.492 & 3.180 & 
    0.862 & 0.689 & 7.40 & 
    0.648 & 0.422 & \textbf{3.660} & 
    0.549 & \textbf{0.422} & \textbf{2.910} \\

    \rowcolor{tableRow}
    w/ Audio & 
    \textbf{0.710} & \textbf{0.572} & \textbf{2.880} & 
    \textbf{0.626} & \textbf{0.496} & \textbf{3.223} & 
    \textbf{0.887} & \textbf{0.697} & \textbf{7.46} & 
    \textbf{0.655} & \textbf{0.504} & 3.656 & 
    \textbf{0.550} & 0.420 & 2.895 \\
    
    \bottomrule
  \end{tabular}
\end{table*}

\begin{table}[h]
  \centering
  \footnotesize
  \caption{Performance on Zero-shot Generalization.}
  \label{tab:zero_shot_generalization}
  \vspace{-2mm}
  \renewcommand{\arraystretch}{1}
  \setlength{\tabcolsep}{10pt}
  
  \begin{tabular}{l|cccc}
    \toprule

    \multicolumn{5}{c}{\textbf{\faGlobe~~Zero-shot Generalization}} \\
    \midrule
    
    \rowcolor{tableHeader}
    \textbf{Method} & \textbf{AUC-J}$\uparrow$ & \textbf{NSS}$\uparrow$ & \textbf{CC}$\uparrow$ & \textbf{SIM}$\uparrow$ \\
    \midrule

    \multicolumn{5}{c}{\textbf{AVAD Dataset}} \\
    \midrule

    MSPI \cite{xie2024audio}  & 
    0.935 & 3.870 & 0.697 & 0.529 \\

    TAVDiff \cite{yu2025text} & 
    0.949 & 4.290 & 0.729 & 0.550 \\

    AAM (Ours) (w/ Text) & 
    0.936 & 4.101 & 0.712 & 0.557 \\

    AAM (Ours) (Zero-shot, w/o Text) & 
    0.933 & 4.100 & 0.695 & 0.542 \\
    \midrule
    \rowcolor{tableHeader}
    \textbf{Method} & \textbf{AUC-J}$\uparrow$ & \textbf{NSS}$\uparrow$ & \textbf{CC}$\uparrow$ & \textbf{KLD}$\uparrow$ \\
    \midrule
    \multicolumn{5}{c}{\textbf{LEDOV Dataset}} \\
    \midrule
    
    Sal-DCNN~\cite{jiang2019image} & 
    0.892 & 2.838 & 0.573 & 1.304 \\
    AAM (Ours)(w/o Text) & 
    0.941 & 3.652 & 0.708 & 0.888 \\

    \bottomrule
  \end{tabular}
\end{table}

\definecolor{HeaderA}{RGB}{220,230,245}
\definecolor{RowA}{RGB}{245,248,253}

\definecolor{HeaderB}{RGB}{220,240,225}
\definecolor{RowB}{RGB}{245,252,246}

\definecolor{HeaderC}{RGB}{250,230,210}
\definecolor{RowC}{RGB}{255,248,242}

\begin{table*}[t]
  \centering
  \footnotesize
  \caption{Prompt-related ablations and robustness analyses. (a) Condition Swap / Text Prompt Impact. (b) Paraphrase Invariance. (c) Prompt Granularity. Best results are in \textbf{bold}.}
  \label{tab:prompt_all}
  \vspace{-2mm}

  \renewcommand{\arraystretch}{1.3}
  \setlength{\tabcolsep}{1.8pt}
  \resizebox{\textwidth}{!}{
  \begin{tabular}{l|ccc|ccc|ccc|ccc|ccc|ccc|ccc}
    \toprule
    \rowcolor{HeaderA}
    \multicolumn{22}{c}{\textbf{(a) \faKeyboard~~Condition Swap / Text Prompt Impact}} \\
    \midrule

    \rowcolor{HeaderA}
     &
    \multicolumn{3}{c|}{\textbf{Salicon}} &
    \multicolumn{3}{c|}{\textbf{OSIE}} &
    \multicolumn{3}{c|}{\textbf{UI}} &
    \multicolumn{3}{c|}{\textbf{MIT1003}} &
    \multicolumn{3}{c|}{\textbf{SalECI}} &
    \multicolumn{3}{c|}{\textbf{CAT2000}} &
    \multicolumn{3}{c}{\textbf{DHF1K}} \\

    \rowcolor{HeaderA}
    \multirow{-2}{*}{\textbf{Setting}} &
    \textbf{CC}$\uparrow$ & \textbf{SIM}$\uparrow$ & \textbf{KLD}$\downarrow$ &
    \textbf{CC}$\uparrow$ & \textbf{SIM}$\uparrow$ & \textbf{KLD}$\downarrow$ &
    \textbf{CC}$\uparrow$ & \textbf{SIM}$\uparrow$ & \textbf{KLD}$\downarrow$ &
    \textbf{CC}$\uparrow$ & \textbf{SIM}$\uparrow$ & \textbf{KLD}$\downarrow$ &
    \textbf{CC}$\uparrow$ & \textbf{SIM}$\uparrow$ & \textbf{KLD}$\downarrow$ &
    \textbf{CC}$\uparrow$ & \textbf{SIM}$\uparrow$ & \textbf{KLD}$\downarrow$ &
    \textbf{CC}$\uparrow$ & \textbf{SIM}$\uparrow$ & \textbf{NSS}$\uparrow$ \\
    \midrule

    Correct Text Input &
    \textbf{0.925} & \textbf{0.819} & \textbf{0.163} &
    \textbf{0.901} & \textbf{0.760} & \textbf{0.243} &
    \textbf{0.743} & \textbf{0.635} & \textbf{0.524} &
    \textbf{0.831} & \textbf{0.674} & \textbf{0.446} &
    \textbf{0.797} & \textbf{0.678} & \textbf{0.450} &
    \textbf{0.906} & \textbf{0.769} & \textbf{0.235} &
    \textbf{0.579} & \textbf{0.421} & \textbf{3.272} \\

    \rowcolor{RowA}
    No Text Input (General) &
    0.873 & 0.759 & 0.247 &
    0.851 & 0.703 & 0.375 &
    0.480 & 0.494 & 1.060 &
    0.783 & 0.620 & 0.549 &
    0.699 & 0.589 & 0.618 &
    0.833 & 0.719 & 0.361 &
    0.523 & 0.368 & 2.960 \\

    Wrong Prompt (Cross-Task) &
    0.849 & 0.736 & 0.278 &
    0.833 & 0.686 & 0.389 &
    0.485 & 0.498 & 0.968 &
    0.780 & 0.614 & 0.551 &
    0.740 & 0.612 & 0.557 &
    0.853 & 0.736 & 0.308 &
    0.409 & 0.288 & 2.306 \\
    \bottomrule
  \end{tabular}
  }

  \vspace{2mm}

  \renewcommand{\arraystretch}{1.35}
  \setlength{\tabcolsep}{8pt}
  \resizebox{\textwidth}{!}{
  \begin{tabular}{l|cccccc}
    \toprule
    \rowcolor{HeaderB}
    \multicolumn{7}{c}{\textbf{(b) \faEdit~~Paraphrase Invariance}} \\
    \midrule

    \rowcolor{HeaderB}
    \textbf{Metric} & \textbf{DHF1K} & \textbf{DIEM} & \textbf{SumMe} & \textbf{Salicon} & \textbf{OSIE} & \textbf{UI} \\
    \midrule

    CC &
    $0.5768 \pm 0.0020$ &
    $0.6933 \pm 0.0045$ &
    $0.5458 \pm 0.0064$ &
    $0.7141 \pm 0.0119$ &
    $0.7158 \pm 0.0082$ &
    $0.4883 \pm 0.0380$ \\

    \rowcolor{RowB}
    SIM &
    $0.4302 \pm 0.0028$ &
    $0.5702 \pm 0.0011$ &
    $0.3971 \pm 0.0166$ &
    $0.6395 \pm 0.0110$ &
    $0.5961 \pm 0.0067$ &
    $0.5008 \pm 0.0188$ \\

    NSS &
    $3.3121 \pm 0.0158$ &
    $2.8782 \pm 0.0145$ &
    $2.7465 \pm 0.0738$ &
    $1.4653 \pm 0.0211$ &
    $2.4431 \pm 0.0390$ &
    $1.1225 \pm 0.0854$ \\

    \rowcolor{RowB}
    KLD &
    $1.2227 \pm 0.0092$ &
    $0.7708 \pm 0.0167$ &
    $1.2722 \pm 0.0223$ &
    $0.5325 \pm 0.0420$ &
    $0.6748 \pm 0.0393$ &
    $0.9564 \pm 0.0849$ \\
    \bottomrule
  \end{tabular}
  }

  \vspace{2mm}

  \renewcommand{\arraystretch}{1.2}
  \setlength{\tabcolsep}{6pt}
  \resizebox{\textwidth}{!}{
  \begin{tabular}{l|cccc|l|cccc|l|cccc}
    \toprule
    \rowcolor{HeaderC}
    \multicolumn{15}{c}{\textbf{(c) Prompt Granularity Ablation (G1 $\rightarrow$ G4)}} \\
    \midrule

    \multicolumn{5}{c|}{\textbf{CC} $\uparrow$} &
    \multicolumn{5}{c|}{\textbf{NSS} $\uparrow$ (Static)} &
    \multicolumn{5}{c}{\textbf{NSS} $\uparrow$ (Video)} \\
    \midrule

    \rowcolor{HeaderC}
    Dataset & G1 & G2 & G3 & G4 &
    Dataset & G1 & G2 & G3 & G4 &
    Dataset & G1 & G2 & G3 & G4 \\
    \midrule

    Salicon & 0.805 $\rightarrow$ & 0.848 $\rightarrow$ & 0.867 $\rightarrow$ & \textbf{0.913} &
    Salicon & 1.722 $\rightarrow$ & 1.804 $\rightarrow$ & 1.865 $\rightarrow$ & \textbf{1.958} &
    DHF1K   & 0.576 $\rightarrow$ & 0.577 $\rightarrow$ & 0.577 $\rightarrow$ & \textbf{0.579} \\

    \rowcolor{RowC}
    OSIE    & 0.803 $\rightarrow$ & 0.818 $\rightarrow$ & 0.839 $\rightarrow$ & \textbf{0.883} &
    OSIE    & 2.797 $\rightarrow$ & 2.828 $\rightarrow$ & 2.864 $\rightarrow$ & \textbf{3.291} &
    UCF     & 0.742 $\rightarrow$ & 0.742 $\rightarrow$ & \textbf{0.743} $\rightarrow$ & 0.741 \\

    UI      & 0.478 $\rightarrow$ & 0.477 $\rightarrow$ & 0.586 $\rightarrow$ & \textbf{0.734} &
    UI      & 1.097 $\rightarrow$ & 1.104 $\rightarrow$ & 1.347 $\rightarrow$ & \textbf{1.676} &
    DIEM    & 0.700 $\rightarrow$ & 0.698 $\rightarrow$ & 0.697 $\rightarrow$ & \textbf{0.702} \\

    \rowcolor{RowC}
    MIT1003 & 0.756 $\rightarrow$ & 0.766 $\rightarrow$ & 0.795 $\rightarrow$ & \textbf{0.824} &
    MIT1003 & 2.624 $\rightarrow$ & 2.638 $\rightarrow$ & 2.762 $\rightarrow$ & \textbf{2.974} &
    SumMe   & 0.547 $\rightarrow$ & 0.550 $\rightarrow$ & 0.549 $\rightarrow$ & \textbf{0.552} \\

    SalECI   & 0.726 $\rightarrow$ & 0.721 $\rightarrow$ & 0.767 $\rightarrow$ & \textbf{0.807} &
    SalECI   & 1.861 $\rightarrow$ & 1.833 $\rightarrow$ & 1.954 $\rightarrow$ & \textbf{2.038} &
    CAT2000 & 0.871 $\rightarrow$ & 0.863 $\rightarrow$ & 0.866 $\rightarrow$ & \textbf{0.886} \\
    \bottomrule
  \end{tabular}
  }

  \vspace{-2mm}
\end{table*}

\subsection{More Comparison Results}
\label{app_comparison}
Due to space constraints, we present only a subset of state-of-the-art (SOTA) methods in the main text. For a more comprehensive comparison, we here evaluate our model against a broader range of baselines across Image Attention Modeling (Table \ref{tab:benchmarks}), Video Attention Modeling (Tables \ref{tab:video_benchmarks} and \ref{tab:ledov_benchmark}), and Audio-Visual Modeling (Table \ref{tab:audiovideo_benchmarks}). Additionally, we include qualitative visualization results under diverse scenarios and task settings in Fig. \ref{fig_1}, Fig. \ref{fig_2}
\begin{table*}[t]
  \definecolor{highlightbg}{RGB}{250, 240, 240}
  \caption{Quantitative comparison on \textbf{image} attention modeling datasets. The best results are shown in \textcolor{bestred}{\textbf{red}}, and the second best in \textcolor{secondblue}{\textbf{blue}}.}
  \label{tab:benchmarks}
  \centering
  \footnotesize 
  \setlength{\tabcolsep}{2.3pt}
  \renewcommand{\arraystretch}{1}

  \newcommand{\red}[1]{\textcolor{bestred}{\textbf{#1}}}
  \newcommand{\blue}[1]{\textcolor{secondblue}{\textbf{#1}}}
  
  \begin{tabular}{lcccc|lcccc}
    \toprule

    \multicolumn{10}{c}{\textbf{\faImage~~Image Attention Modeling}} \\
    \midrule

    \textbf{Method} & \textbf{CC} $\uparrow$ & \textbf{KLD} $\downarrow$ & \textbf{AUC} $\uparrow$ & \textbf{SIM} $\uparrow$ & 
    \textbf{Method} & \textbf{CC} $\uparrow$ & \textbf{KLD} $\downarrow$ & \textbf{AUC} $\uparrow$ & \textbf{SIM} $\uparrow$ \\
    
    \midrule

    \multicolumn{5}{l}{\cellcolor{gray!5}\textit{\textbf{Dataset: MIT1003 (Natural)}}} & 
    \multicolumn{5}{l}{\cellcolor{gray!5}\textit{\textbf{Dataset: U-EYE (Web page)}}} \\
    \midrule
    
    DVA~\cite{wang2017deep} & 0.699 & 0.753 & 0.897 & 0.566 & 
    SAM~\cite{cornia2018predicting} & 0.580 & 1.490 & 0.811 & 0.520 \\
    
    SAM-ResNet~\cite{cornia2018predicting} & 0.746 & 1.247 & 0.902 & 0.597 & 
    UMSI~\cite{fosco2020predicting} & 0.562 & 1.580 & 0.805 & 0.510 \\
    
    UNISAL~\cite{droste2020unified} & 0.734 & 1.014 & 0.902 & 0.597 & 
    TransalNet~\cite{lou2022transalnet} & 0.696 & 0.616 & 0.839 & 0.598 \\
    
    FastSal~\cite{hu2021fastsal} & 0.590 & 1.036 & 0.875 & 0.478 & 
    SAM++~\cite{jiang2023ueyes} & 0.580 & 1.190 & 0.800 & 0.530 \\
    
    TransalNet~\cite{lou2022transalnet} & 0.722 & 0.660 & 0.903 & 0.592 & 
    UMSI++~\cite{jiang2023ueyes} & 0.670 & 0.860 & 0.830 & 0.580 \\
    
    SUM~\cite{hosseini2025sum} & \blue{0.768} & \blue{0.563} & \blue{0.913} & \blue{0.630} & 
    SUM~\cite{hosseini2025sum} & \blue{0.731} & \blue{0.544} & \blue{0.846} & \blue{0.630} \\
    
    \rowcolor{rowpink}
    AAM (Ours) & \red{0.831} & \red{0.446} & \red{0.923} & \red{0.674} & 
    AAM (Ours) & \red{0.743} & \red{0.524} & \red{0.847} & \red{0.635} \\
    \midrule

    \multicolumn{5}{l}{\cellcolor{gray!5}\textit{\textbf{Dataset: CAT2000 (Natural)}}} & 
    \multicolumn{5}{l}{\cellcolor{gray!5}\textit{\textbf{Dataset: SalECI (E-Commercial)}}} \\
    \midrule
    
    DVA~\cite{wang2017deep} & 0.861 & 0.449 & 0.878 & 0.734 & 
    SSM~\cite{cornia2018predicting} & 0.720 & 0.599 & 0.830 & 0.611 \\
    
    SAM-ResNet~\cite{cornia2018predicting} & 0.870 & 0.670 & 0.878 & 0.739 & 
    DeepGaze~\cite{linardos2021deepgaze} & 0.560 & 0.995 & 0.842 & 0.399 \\
    
    MSI-Net~\cite{kroner2020contextual} & 0.866 & 0.428 & 0.881 & 0.730 & 
    SSwin~\cite{jiang2022does} & 0.687 & 0.652 & 0.868 & 0.606 \\
    
    UNISAL~\cite{droste2020unified} & 0.842 & 0.530 & 0.876 & 0.721 & 
    EML-NET~\cite{jiang2023ueyes} & 0.510 & 1.220 & 0.807 & 0.536 \\
    
    MDNSal~\cite{reddy2020tidying} & 0.889 & 0.293 & 0.878 & 0.751 & 
    TransalNet~\cite{jiang2023ueyes} & 0.717 & 0.873 & 0.824 & 0.534 \\
    
    FastSal~\cite{hu2021fastsal} & 0.721 & 0.552 & 0.860 & 0.603 & 
    Temp-Sal~\cite{aydemir2023tempsal} & 0.719 & 0.712 & 0.813 & 0.629 \\
    
    TransalNet~\cite{lou2022transalnet} & 0.877 & 0.287 & 0.882 & 0.744 & 
    Hosseini~\cite{hosseini2025brand} & 0.750 & 0.578 & \blue{0.892} & 0.645 \\
    
    SUM~\cite{hosseini2025sum} & \blue{0.882} & \blue{0.270} & \blue{0.888} & \blue{0.754} & 
    SUM~\cite{hosseini2025sum} & \blue{0.789} & \blue{0.473} & \red{0.899} & \red{0.680} \\
    
    \rowcolor{rowpink}
    AAM (Ours) & \red{0.906} & \red{0.235} & \red{0.890} & \red{0.769} & 
    AAM (Ours) & \red{0.797} & \red{0.450} & \red{0.899} & \blue{0.678} \\
    \midrule

    \multicolumn{5}{l}{\cellcolor{gray!5}\textit{\textbf{Dataset: SALICON (Natural)}}} & 
    \multicolumn{5}{l}{\cellcolor{gray!5}\textit{\textbf{Dataset: OSIE (Natural)}}} \\
    \midrule
    
    MDNSal~\cite{reddy2020tidying} & 0.899 & 0.217 & 0.868 & 0.797 & 
    SAM-ResNet~\cite{cornia2018predicting} & 0.758 & 0.480 & 0.860 & 0.648 \\
    
    MSI-Net~\cite{kroner2020contextual} & 0.899 & 0.307 & 0.865 & 0.784 & 
    UMSI~\cite{fosco2020predicting} & 0.746 & 0.513 & 0.856 & 0.631 \\
    
    UNISAL~\cite{droste2020unified} & 0.879 & 0.354 & 0.864 & 0.775 & 
    EML-NET~\cite{jia2020eml} & 0.717 & 0.537 & 0.854 & 0.619 \\
    
    DeepGaze~\cite{linardos2021deepgaze} & 0.872 & 0.285 & \blue{0.869} & 0.733 & 
    Hosseini~\cite{chen2023learning} & 0.761 & 0.506 & 0.860 & 0.652 \\
    
    TransalNet~\cite{lou2022transalnet} & 0.890 & 0.220 & 0.867 & 0.783 & 
    TransalNet~\cite{jiang2023ueyes} & 0.791 & 0.667 & 0.923 & 0.651 \\
    
    Temp-Sal~\cite{aydemir2023tempsal} & \blue{0.911} & 0.195 & \blue{0.869} & 0.800 & 
    UniAR~\cite{li2023uniar} & 0.754 & 0.547 & 0.867 & 0.647 \\
    
    SUM~\cite{hosseini2025sum} & 0.909 & \blue{0.192} & \red{0.876} & \blue{0.804} & 
    SUM~\cite{hosseini2025sum} & \blue{0.861} & \blue{0.340} & \blue{0.924} & \blue{0.727} \\
    
    \rowcolor{rowpink}
    AAM (Ours) & \red{0.925} & \red{0.163} & \red{0.876} & \red{0.819} & 
    AAM (Ours) & \red{0.901} & \red{0.243} & \red{0.933} & \red{0.760} \\
    \bottomrule
  \end{tabular}
  \vskip -0.1in
\end{table*}

\begin{table*}[t]
  \caption{Quantitative comparison on \textbf{video} attention modeling datasets. The best results are shown in \textcolor{bestred}{\textbf{red}}, and the second best in \textcolor{secondblue}
  {\textbf{blue}}.}
  \vspace{-2mm}
  \label{tab:video_benchmarks}
  \centering
  \footnotesize 
  \definecolor{highlightbg}{RGB}{250, 240, 240}
  \setlength{\tabcolsep}{4.25pt} 
  \renewcommand{\arraystretch}{1}
  
  \newcommand{\red}[1]{\textcolor{bestred}{\textbf{#1}}}
  \newcommand{\blue}[1]{\textcolor{secondblue}{\textbf{#1}}}

  \begin{tabular}{l|cccc|cccc|cccc}
    \toprule

    \multicolumn{13}{c}{\textbf{\faFilm~~Video Attention Modeling}} \\
    \cmidrule{1-13}

    \multirow{2}{*}{\textbf{Method}} & 
    \multicolumn{4}{c|}{\textbf{DHF1K}} & 
    \multicolumn{4}{c|}{\textbf{Hollywood2}} & 
    \multicolumn{4}{c}{\textbf{UCF}} \\

    \cmidrule(lr){2-5} \cmidrule(lr){6-9} \cmidrule(lr){10-13}

     & \textbf{AUC}$\uparrow$ & \textbf{SIM}$\uparrow$ & \textbf{CC}$\uparrow$ & \textbf{NSS}$\uparrow$
     & \textbf{AUC}$\uparrow$ & \textbf{SIM}$\uparrow$ & \textbf{CC}$\uparrow$ & \textbf{NSS}$\uparrow$
     & \textbf{AUC}$\uparrow$ & \textbf{SIM}$\uparrow$ & \textbf{CC}$\uparrow$ & \textbf{NSS}$\uparrow$ \\
    
    \midrule
    
    DeepVS~\cite{jiang2018deepvs} & 
    0.856 & 0.256 & 0.344 & 1.911 & 
    0.887 & 0.356 & 0.446 & 2.313 & 
    0.870 & 0.321 & 0.405 & 2.089 \\
    
    ACLNet~\cite{wang2018revisiting} & 
    0.890 & 0.315 & 0.434 & 2.354 & 
    0.913 & 0.542 & 0.623 & 3.086 & 
    0.897 & 0.406 & 0.510 & 2.567 \\
    
    TASED-Net~\cite{min2019tased} & 
    0.895 & 0.361 & 0.470 & 2.667 & 
    0.918 & 0.507 & 0.646 & 3.302 & 
    0.899 & 0.469 & 0.582 & 2.920 \\

    UNISAL~\cite{droste2020unified} & 
    0.901 & 0.390 & 0.490 & 2.776 & 
    0.934 & 0.542 & 0.673 & 3.380 & 
    0.917 & 0.498 & 0.636 & 3.189 \\
    
    HD2S~\cite{bellitto2021hierarchical} & 
    0.908 & 0.406 & 0.503 & 2.812 & 
    0.936 & 0.551 & 0.670 & 3.352 & 
    0.904 & 0.507 & 0.604 & 3.114 \\
    
    ViNet~\cite{jain2021vinet} & 
    0.908 & 0.381 & 0.511 & 2.872 & 
    0.930 & 0.550 & 0.693 & 3.730 & 
    0.924 & 0.522 & 0.673 & 3.620 \\
    
    STSANet~\cite{wang2021spatio} & 
    0.913 & 0.383 & 0.529 & 3.010 & 
    0.938 & 0.579 & 0.721 & \blue{3.974} & 
    0.938 & \blue{0.563} & 0.698 & 3.889 \\
    
    ECANet~\cite{xue2022ecanet} & 
    0.903 & 0.385 & 0.500 & 2.814 & 
    0.929 & 0.526 & 0.673 & 3.910 & 
    0.923 & 0.561 & 0.685 & 3.698 \\
    
    VSSM~\cite{lu2023visual} & 
    \blue{0.915} & 0.383 & 0.521 & 3.027 & 
    0.939 & 0.583 & 0.729 & 3.927 & 
    0.936 & 0.560 & 0.705 & \blue{3.908} \\
    
    TSFP-Net~\cite{chang2023human} & 
    0.912 & 0.392 & 0.517 & 2.967 & 
    0.936 & 0.571 & 0.711 & 3.930 & 
    \blue{0.939} & \blue{0.563} & \blue{0.710} & \red{3.913} \\
    
    MSFF-Net~\cite{zhang2023multi} & 
    0.913 & 0.392 & 0.534 & 3.066 & 
    \blue{0.940} & 0.574 & 0.723 & 3.952 & 
    0.933 & 0.557 & 0.698 & 3.769 \\

    TFS-Net~\cite{li2025tfs} & 
    0.912 & 0.412 & 0.527 & 2.953 & 
    0.934 & 0.580 & 0.725 & 3.952 & 
    0.930 & 0.558 & 0.664 & 3.653 \\
    
    RecSal-Net~\cite{woo2025recsal} & 
    0.913 & \blue{0.414} & \blue{0.547} & \blue{3.135} & 
    0.938 & \red{0.606} & \blue{0.737} & 3.901 & 
    0.918 & 0.523 & 0.644 & 3.381 \\
    
    \rowcolor{rowpink}
    AAM (Ours) & 
    \red{0.919} & \red{0.421} & \red{0.563} & \red{3.272} & 
    \red{0.944} & \blue{0.599} & \red{0.742} & \red{4.055} & 
    \red{0.943} & \red{0.584} & \red{0.736} & 3.892 \\
    
    \bottomrule
  \end{tabular}
\end{table*}

\begin{table}[t]
  \caption{Quantitative comparison on the \textbf{LEDOV} (video) dataset. The best results are shown in \textcolor{bestred}{\textbf{red}}, and the second best in \textcolor{secondblue}{\textbf{blue}}.}
  \vspace{-2mm}
  \label{tab:ledov_benchmark}
  \centering
  \footnotesize

  \definecolor{highlightbg}{RGB}{250, 240, 240}

  \setlength{\tabcolsep}{3.5pt} 
  \renewcommand{\arraystretch}{1.1}
  
  \newcommand{\red}[1]{\textcolor{bestred}{\textbf{#1}}}
  \newcommand{\blue}[1]{\textcolor{secondblue}{\textbf{#1}}}
  
  \begin{tabular}{l|cccc}
    \toprule

    \multicolumn{5}{c}{\textbf{LEDOV}} \\
    \cmidrule{1-5}

    \textbf{Method} & \textbf{AUC-J}$\uparrow$ & \textbf{NSS}$\uparrow$ & \textbf{CC}$\uparrow$ & \textbf{KLD}$\downarrow$ \\
    
    \midrule
    
    GBVS~\cite{harel2006graph} & 
    0.839 & 1.541 & 0.322 & 1.824 \\
    
    Rudoy et al.~\cite{rudoy2013learning} & 
    0.799 & 1.454 & 0.320 & 2.421 \\
    
    SAILICON~\cite{huang2015salicon} & 
    0.851 & 2.332 & 0.437 & 1.635 \\
    
    OBDL~\cite{hossein2015many} & 
    0.801 & 1.545 & 0.315 & 2.053 \\
    
    AWS-D~\cite{leboran2016dynamic} & 
    0.795 & 1.365 & 0.294 & 2.023 \\
    
    Xu et al.~\cite{xu2016learning} & 
    0.827 & 1.475 & 0.382 & 1.652 \\
    
    BMS~\cite{zhang2016exploiting} & 
    0.757 & 0.979 & 0.214 & 2.225 \\
    
    SalGAN~\cite{pan2017salgan} & 
    0.868 & 2.193 & 0.428 & 1.680 \\
    
    DVA~\cite{wang2017deep} & 
    0.885 & 2.840 & 0.557 & 1.323 \\
    
    DeepVS~\cite{jiang2018deepvs} & 
    \blue{0.902} & \blue{2.999} & \blue{0.586} & \blue{1.222} \\
    
    ACLNet~\cite{wang2018revisiting} & 
    0.897 & 2.872 & 0.570 & 1.445 \\
    
    Sal-DCNN~\cite{jiang2019image} & 
    0.892 & 2.838 & 0.573 & 1.304 \\
    
    \rowcolor{rowpink}
    AAM (Ours) & 
    \red{0.941} & \red{3.652} & \red{0.708} & \red{0.888} \\
    
    \bottomrule
  \end{tabular}
\end{table}

\begin{table*}[t]
  \caption{Quantitative comparison on \textbf{audio-visual} attention modeling datasets.}
  \vspace{-2mm}
  \label{tab:audiovideo_benchmarks}
  \centering
  \footnotesize 
  \definecolor{highlightbg}{RGB}{250, 240, 240}
  \setlength{\tabcolsep}{1.8pt} 
  \renewcommand{\arraystretch}{1}
  \definecolor{mylavender}{HTML}{DBD9E4}
  \newcommand{\red}[1]{\textcolor{bestred}{\textbf{#1}}}
  \newcommand{\blue}[1]{\textcolor{secondblue}{\textbf{#1}}}
  
  \begin{tabular}{l|ccc|ccc|ccc|ccc|ccc}
    \noalign{\hrule height 1pt}

    \multicolumn{16}{c}{\textbf{\faVolumeUp~\faFilm~~Audio-Video Attention Modeling}} \\
    \hline

    & 
    \multicolumn{3}{c|}{\textbf{DIEM}} & 
    \multicolumn{3}{c|}{\textbf{ETMD}} & 
    \multicolumn{3}{c|}{\textbf{SumMe}} & 
    \multicolumn{3}{c|}{\textbf{Coutrot1}} & 
    \multicolumn{3}{c}{\textbf{Coutrot2}} \\

    \hhline{>{\arrayrulecolor{mylavender}}->{\arrayrulecolor{black}}|---|---|---|---|---}

     \multirow{-2}{*}{\textbf{Method}} & 
     \textbf{CC}$\uparrow$ & \textbf{NSS}$\uparrow$ & \textbf{AUC}$\uparrow$
     & \textbf{CC}$\uparrow$ & \textbf{NSS}$\uparrow$ & \textbf{AUC}$\uparrow$
     & \textbf{CC}$\uparrow$ & \textbf{NSS}$\uparrow$ & \textbf{AUC}$\uparrow$
     & \textbf{CC}$\uparrow$ & \textbf{NSS}$\uparrow$ & \textbf{AUC}$\uparrow$
     & \textbf{CC}$\uparrow$ & \textbf{NSS}$\uparrow$ & \textbf{AUC}$\uparrow$ \\

    \noalign{\hrule height 1pt}

    TSFP~\cite{chang2021temporal} & 
    0.651 & 2.62 & 0.906 & 
    0.576 & 3.07 & 0.932 & 
    0.464 & 2.30 & 0.894 & 
    0.571 & 2.73 & \blue{0.895} & 
    0.743 & 5.31 & 0.959 \\
    
    STAViS~\cite{tsiami2020stavis} & 
    0.580 & 2.29 & 0.885 & 
    0.585 & 3.09 & 0.934 & 
    0.427 & 2.10 & 0.888 & 
    0.497 & 2.29 & 0.868 & 
    0.732 & 5.64 & 0.961 \\

    ViNet~\cite{jain2021vinet} & 
    0.632 & 2.53 & 0.899 & 
    0.571 & 3.08 & 0.928 & 
    0.463 & 2.41 & 0.897 & 
    0.560 & 2.73 & 0.889 & 
    0.754 & 5.95 & 0.951 \\
    
    CASP~\cite{xiong2023casp} & 
    0.655 & 2.61 & 0.906 & 
    \blue{0.620} & \blue{3.34} & \blue{0.940} & 
    0.499 & \blue{2.60} & \blue{0.907} & 
    0.561 & 2.65 & 0.889 & 
    0.788 & 6.34 & \blue{0.963} \\

    DAVS~\cite{zhu2024discrete} & 
    0.580 & 2.29 & 0.884 & 
    0.600 & 2.96 & 0.932 & 
    0.423 & 2.29 & 0.889 & 
    0.482 & 2.19 & 0.869 & 
    0.734 & 4.98 & 0.960 \\
    
    MSPI~\cite{xie2024audio} & 
    0.653 & 2.62 & 0.907 & 
    0.601 & 3.24 & 0.937 & 
    0.482 & 2.49 & 0.901 & 
    0.567 & 2.76 & \blue{0.895} & 
    0.783 & 6.28 & \blue{0.963} \\

    TAVDiff~\cite{yu2025text} & 
    \blue{0.670} & \blue{2.75} & \blue{0.909} & 
    0.613 & 3.15 & 0.937 & 
    \blue{0.500} & 2.51 & 0.904 & 
    \blue{0.607} & \blue{2.85} & 0.892 & 
    \blue{0.798} & \blue{6.52} & \blue{0.963} \\
    
    \rowcolor{rowpink}
    AAM (Ours) & 
    \red{0.710} & \red{2.88} & \red{0.919} & 
    \red{0.655} & \red{3.66} & \red{0.945} & 
    \red{0.550} & \red{2.90} & \red{0.920} & 
    \red{0.626} & \red{3.22} & \red{0.911} & 
    \red{0.887} & \red{7.46} & \red{0.971} \\
    
    \noalign{\hrule height 1pt}
  \end{tabular}
\end{table*}

\section{Future Work.}
\label{app:future}

\paragraph{LLM-Driven Hierarchical Refinement.} 
Currently, textual conditions are primarily derived from fixed dataset labels or coarse captions. In future work, we plan to integrate \textbf{Large Language Models (LLMs)} to synthesize fine-grained, open-vocabulary descriptions. By leveraging the reasoning capabilities of LLMs, we aim to construct richer semantic trees, enabling the model to capture more subtle, granular hierarchical relationships in human attention modulation.
\paragraph{Universal Foundation for Downstream Tasks.} 
Furthermore, we envision AAM serving as a versatile \textbf{foundation model} for the broader landscape of human attention and saliency-related research. We propose that diverse downstream tasks can be effectively unified under our paradigm, allowing for consistent modeling across different applications. Establishing AAM as a general-purpose backbone for these tasks represents a pivotal direction for our future research.

\begin{figure}[!t]
    \centering
    \captionsetup{skip=5pt}
    \includegraphics[width=1\linewidth]{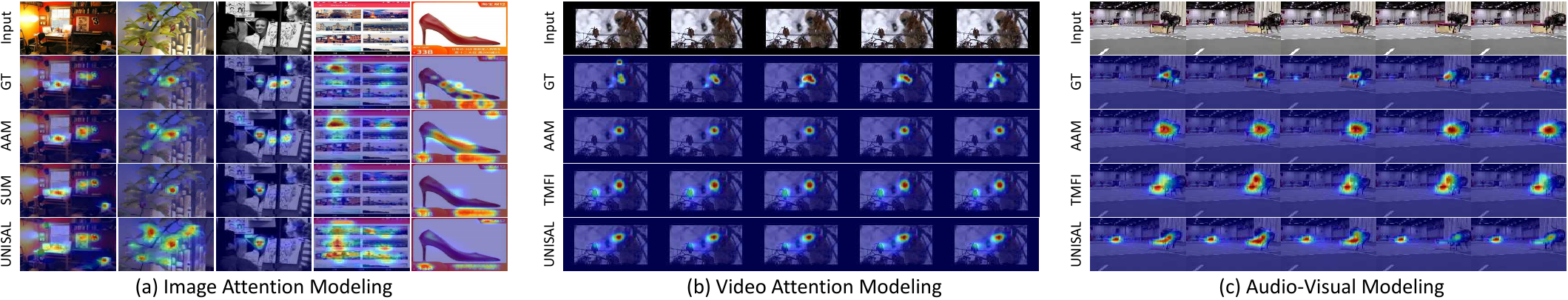}
    \caption{Visual comparison against SOTA methods.}
    \label{fig_1}
    \vspace{-3mm}
\end{figure}

\begin{figure}[!t]
    \centering
    \captionsetup{skip=5pt}
    \includegraphics[width=1\linewidth]{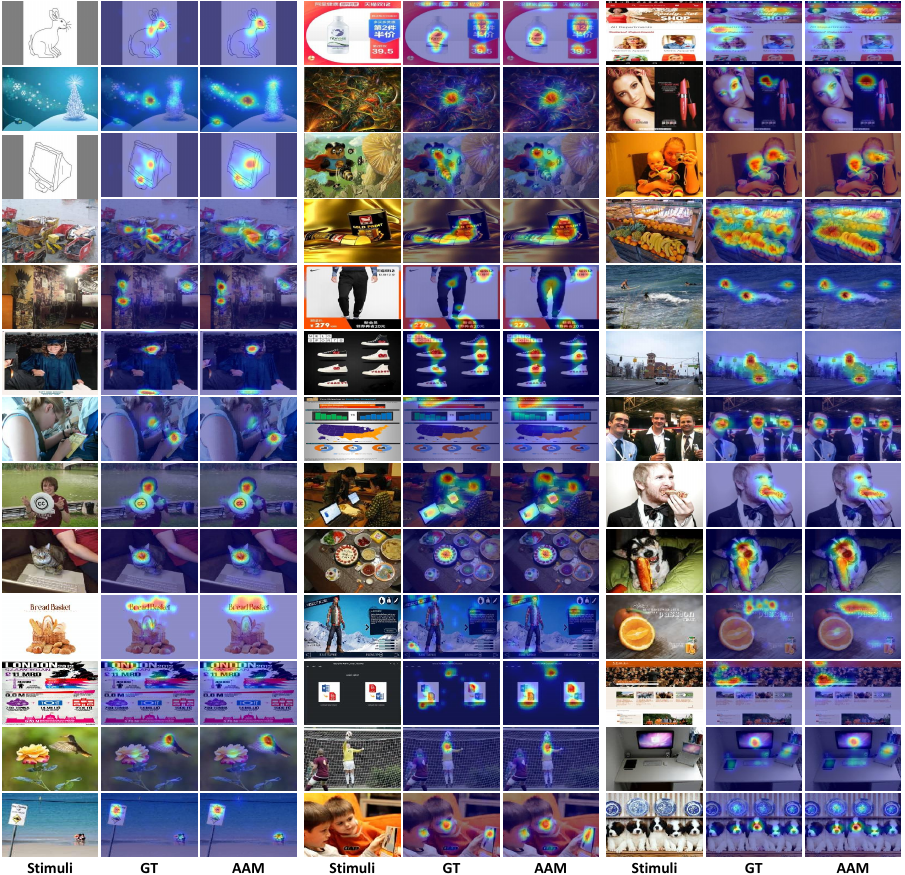}
    \caption{Prediction results of AAM (Ours) across various task scenarios, where GT denotes the human attention distribution.}
    \label{fig_2}
    \vspace{-3mm}
\end{figure}

\end{document}